\declaretheorem{theorem}
\declaretheorem[sibling=theorem]{proposition}
\declaretheorem[style=definition,sibling=theorem,qed=$\diamond$]{definition}
\title{Computing Horn Rewritings of Description Logics Ontologies}
\author{Mark Kaminski \and Bernardo Cuenca Grau\\
Department of Computer Science, University of Oxford, UK}
\begin{document}

\maketitle

\begin{abstract}
  We study the problem of rewriting an ontology $\Ont_1$ expressed in
  a DL $\mathcal{L}_1$ into an ontology $\Ont_2$ in a Horn DL
  $\mathcal{L}_2$ such that $\Ont_1$ and $\Ont_2$ are equisatisfiable
  when extended with an arbitrary dataset.  Ontologies that admit such
  rewritings are amenable to reasoning techniques ensuring
  tractability in data complexity.  After showing undecidability
  whenever $\mathcal{L}_1$ extends $\mathcal{ALCF}$, we focus on
  devising efficiently checkable conditions that ensure existence of a
  Horn rewriting. By lifting existing techniques for rewriting
  Disjunctive Datalog programs into plain Datalog to the case of
  arbitrary first-order programs with function symbols, we identify a
  class of ontologies that admit Horn rewritings of polynomial size.
  Our
  experiments indicate that many real-world ontologies satisfy our
  sufficient conditions and thus admit polynomial Horn
  rewritings.
\end{abstract}

\section{Introduction}

Reasoning over ontology-enriched datasets is a 
key requirement in many applications of semantic technologies.
Standard reasoning tasks are, however,
of high worst-case complexity. Satisfiability checking 
is \textsc{2NExpTime}-complete for the description logic (DL) $\mathcal{SROIQ}$  underpinning
the standard ontology language OWL 2 and \textsc{NExpTime}-complete for $\mathcal{SHOIN}$, which underpins
OWL DL \cite{DBLP:conf/kr/Kazakov08}. Reasoning is also
\coNP-hard with respect to \emph{data complexity}---a key 
measure of complexity for applications involving
large amounts of instance data \cite{DBLP:conf/ijcai/HustadtMS05}.

Tractability in data complexity is typically associated with
\emph{Horn} DLs, where 
ontologies correspond to first-order Horn clauses \cite{DBLP:conf/ijcai/OrtizRS11,DBLP:conf/ijcai/HustadtMS05}.
The more favourable computational properties of Horn DLs
make them a natural choice for data-intensive applications, but
they also come at the expense of a loss in expressive power.
In particular, Horn DLs 
cannot capture disjunctive axioms, i.e., statements such as
``every $X$ is either a $Y$ or a $Z$''. Disjunctive axioms are common
in real-world ontologies, like the NCI Thesaurus
or the ontologies underpinning
the European Bioinformatics Institute (EBI) linked data platform.\footnote{http://www.ebi.ac.uk/rdf/platform} 
 
In this paper we are interested in \emph{Horn rewritability} of description
logic ontologies; that is, whether an ontology $\Ont_1$ expressed in a DL
$\mathcal{L}_1$ can be rewritten into an ontology $\Ont_2$ in a Horn 
DL $\mathcal{L}_2$ 
such that $\Ont_1$ and $\Ont_2$ are equisatisfiable when extended 
with an arbitrary dataset. Ontologies that admit such Horn rewritings 
are amenable to more efficient reasoning techniques that ensure tractability
in data complexity.

Horn rewritability of DL ontologies is strongly related to 
the rewritability of Disjunctive Datalog programs into Datalog, where  
both the source and target languages for rewriting
are function-free.
\citeA{KaminskiNCG14AAAI} 
characterised 
Datalog rewritability of Disjunctive Datalog programs
in terms of linearity: a restriction that requires each rule to contain at most
one body atom that is IDB (i.e.,  
whose predicate also occurs in head position in the program). It was shown that 
every  linear Disjunctive
Datalog program
can be rewritten into plain Datalog (and vice versa)
by means of \emph{program transposition}---a polynomial
transformation in which rules are ``inverted''
by shuffling all IDB atoms between head and body while at the same time
replacing their predicates by auxiliary ones. 
Subsequently, \citeA{KaminskiNCG14RR} proposed the class
of \emph{markable} Disjunctive Datalog programs, where
the linearity requirement is relaxed 
so that it applies only to a subset
of ``marked'' atoms. 
Every markable program can be
polynomially rewritten into Datalog  
by exploiting a variant of transposition
where only marked atoms are affected.

Our contributions in this paper are as follows.
In Section \ref{sec:horn-rewritability}, we 
show undecidability of Horn rewritability whenever the
input ontology is expressed in $\mathcal{ALCF}$.
This is in consonance 
with the related undecidability results  
by \citeA{BienvenuCLW14} and \citeA{Lutz:2012ug}
for Datalog rewritability and non-uniform data complexity for $\mathcal{ALCF}$ ontologies.

In Section \ref{sec:markability-transposition}, we lift the
markability condition and the transposition transformation in
\cite{KaminskiNCG14RR} for Disjunctive Datalog to arbitrary
first-order programs with function symbols.  We then show that all
markable first-order programs admit Horn rewritings of polynomial
size.  This result is rather general and has potential implications in
areas such as theorem proving \cite{Robinson:2001:HAR:778522} and
knowledge compilation \cite{Kcomp}.
      
The notion of markability for first-order programs
can be seamlessly adapted to ontologies via the standard
FOL translation of DLs \cite{DBLP:conf/dlog/2003handbook}. 
This is, however, of limited practical value
since Horn programs obtained via transposition may not be expressible using 
standard DL constructors. In Section \ref{sec:markability-DLs},
we introduce an alternative satisfiability-preserving
translation from $\mathcal{ALCHIF}$ ontologies into first-order programs
and show in
Section \ref{sec:rewriting} that the corresponding transposed programs
can be translated back into Horn-$\mathcal{ALCHIF}$ ontologies.
Finally,
we focus on complexity and
show that reasoning  over markable $\mathcal{L}$-ontologies
is 
\textsc{ExpTime}-complete in combined complexity and
\textsc{PTime}-complete w.r.t.\  data for each DL
$\mathcal{L}$ between $\mathcal{ELU}$ and $\mathcal{ALCHIF}$.
All our results immediately extend to DLs with transitive
roles (e.g., $\mathcal{SHIF}$) by exploiting standard transitivity
elimination techniques \cite{DBLP:conf/dlog/2003handbook}.

We have implemented markability checking and evaluated 
our techniques on a large ontology repository.
Our results indicate that many real-world ontologies
are markable and thus admit Horn rewritings of polynomial size.

The proofs of all our results are delegated to the appendix.


\section{Preliminaries}

We assume standard first-order syntax and semantics.
We treat the universal truth $\top$ and falsehood $\bot$ symbols as
well as equality ($\equality$) as ordinary predicates of arity one
($\top$ and $\bot$) and two ($\equality$),
the meaning of which will be axiomatised.

\smallskip
\noindent
\textbf{Programs }
A \emph{first-order rule} (or just a rule) is a sentence 
$$\forall \vec{x} \forall
\vec{z}.[\varphi(\vec{x},\vec{z}) \rightarrow \psi(\vec{x})]$$
where
variables $\vec{x}$ and $\vec{z}$ are disjoint,
$\varphi(\vec{x},\vec{z})$ is a conjunction of distinct 
atoms over $\vec x \cup \vec y$, and $\psi(\vec{x})$ is 
a disjunction of distinct atoms over $\vec x$. Formula
$\varphi$ is the \emph{body} of $r$, and $\psi$ is the \emph{head}.
Quantifiers are omitted for brevity, and safety is assumed 
(all variables in the rule occur in the
body).  
We define the following sets of rules for a finite signature $\Sigma$:
\begin{inparaenum}[\it(i)]
\item $\toppart{\P_{\Sigma}}$ consists of a rule $P(x_1, \ldots, x_n)
  \rightarrow \top(x_i)$ for each predicate $P \in \Sigma$ and each $1
  \leq i \leq n$ and a rule $\,\to\top(a)$ for each constant $a \in
  \Sigma$;
\item $\botpart{\P_{\Sigma}}$ consists of the rule having $\bot(x)$ in the body and an empty head; and
\item $\eqpart{\P_{\Sigma}}$ consists of the standard axiomatisation
  of $\equality$ as a congruence over $\Sigma$.\footnote{
    Reflexivity of $\equality$ is axiomatised by the safe
    rule \mbox{$\top(x) \rightarrow x\,{\equality}\,x$}.}
\end{inparaenum}
A \emph{program} is a finite set of rules $\P = \P_0 \cup
\toppart{\P_{\Sigma}} \cup \botpart{\P_{\Sigma}} \cup
\eqpart{\P_{\Sigma}}$ with $\Sigma$ the signature of $\P_0$, where we
assume w.l.o.g.\ that the body of each rule in $\P_0$ does not mention
$\bot$ or $\equality$, and the head is non-empty and does not mention
$\top$.
%
We omit $\Sigma$ for
the components of $\P$
and write $\toppart{\P}$, $\botpart{\P}$ and $\eqpart{\P}$.
A rule is \emph{Horn} if its head consists of at most one atom, and
a program is Horn if so are all of its rules.
Finally, a \emph{fact} is a ground, function-free
atom, and a \emph{dataset} is a finite set of facts.

\begin{table*}[t]
\begin{footnotesize}
\begin{displaymath}
\begin{array}{@{}l@{\quad}r@{\;}l@{\qquad}l@{}}
    T1.  & \bigsqcap_{i=1}^n A_i  & \sqsubseteq \bigsqcup_{j=1}^m C_j                 & \bigwedge_{i=1}^n A_i(x) \rightarrow \bigvee_{j=1}^m C_j(x) \\
    T2.  & \exists R.A   & \sqsubseteq C                 & \Inv{R}{x}{y} \wedge A(y) \rightarrow C(x) \\
    T3. & A             & \sqsubseteq  \exists R.B & A(x) \rightarrow \Inv{R}{x}{f(x)};~ A(x) \rightarrow B(f(x)) \\
    T4.  & A   & \sqsubseteq \forall R.C                 & A(x) \wedge \Inv{R}{x}{y}  \to C(y) \\
    T5.  & S             & \sqsubseteq R                 & S(x,y) \rightarrow \Inv{R}{x}{y} \\
    T6. & A             & \sqsubseteq  \atmostq{1}{R}{B} & A(z) \wedge \Inv{R}{z}{x_1} \wedge \Inv{R}{z}{x_2} \wedge B(x_1) \wedge B(x_2) \rightarrow  x_1 \equality x_2  
\end{array} 
\end{displaymath}
\caption{Normalised DL axioms. $A,B$ are named 
or $\top$;  $C$ named or $\bot$; 
role $S$ is named and $R$ is a (possibly inverse) role. 
}
\label{tab:RL}
\end{footnotesize}
\end{table*} 

\smallskip
\noindent
\textbf{Ontologies }
We assume familiarity with DLs and ontology
languages
\cite{DBLP:conf/dlog/2003handbook}.
A DL signature $\Sigma$ consists of disjoint countable sets of concept names
$\Sigma_C$ and role names $\Sigma_R$. A role is an element of
$\Sigma_R \cup \{R^- \mid R \in \Sigma_R\}$. The function
$\mathsf{inv}$ is defined over roles as follows, where
$R \in \Sigma_R$: $\mathsf{inv}(R) = R^-$ and $\mathsf{inv}(R^-) = R$.
W.l.o.g., we consider normalised axioms as on the left-hand side of
Table~\ref{tab:RL}. 

An $\mathcal{ALCHIF}$ ontology $\Ont$ is a finite set of DL axioms
of type T1-T6 in Table \ref{tab:RL}. An ontology is \emph{Horn} if
it contains no axiom of type T1 satisfying $m \geq 2$.
Given $\Ont$, we denote with $\sqsubseteq^{*}$ the minimal 
reflexive and transitive relation over
roles in $\Ont$ such that $R_1 \sqsubseteq^{*} R_2$ and $\mathsf{inv}(R_1) \sqsubseteq^{*} \mathsf{inv}(R_2)$
hold whenever $R_1 \sqsubseteq R_2$ is an axiom in $\Ont$.

We refer to the DL where only axioms of type T1-T3 are available and
the use of inverse roles is disallowed as $\mathcal{ELU}$. The logic $\mathcal{ALC}$
extends  $\mathcal{ELU}$ with axioms T4. We then 
use standard naming conventions for DLs based on the presence of
inverse roles $(\mathcal{I})$, axioms T5 ($\mathcal{H}$) and axioms T6 ($\mathcal{F}$).
Finally, an ontology is $\mathcal{EL}$ if it is both $\mathcal{ELU}$ and Horn.


Table \ref{tab:RL} also provides the standard translation $\pi$ from
normalised axioms into first-order rules, where $\Inv{R}{x}{y}$ is defined as
$R(x,y)$ if $R$ is named and as $S(y,x)$ if $R = S^-$.
We define $\pi(\Ont)$ as the smallest program containing $\pi(\alpha)$
for each axiom $\alpha$ in $\Ont$.
Given a dataset $\Dat$, 
we say that $\Ont \cup \Dat$ is satisfiable iff so is $\pi(\Ont) \cup \Dat$ in first-order logic.


\section{Horn Rewritability} \label{sec:horn-rewritability}

Our focus is on satisfiability-preserving rewritings. 
Standard reasoning tasks in description logics
are reducible to unsatisfiability checking \cite{DBLP:conf/dlog/2003handbook}, which makes our
results practically relevant.
We start by formulating our notion of rewriting in general terms.

\begin{definition}
Let $\mathcal{F}$ and $\mathcal{F}'$ be sets of rules.
We say that $\mathcal{F}'$ is 
a \emph{rewriting} of $\mathcal{F}$
if it holds that $\mathcal{F} \cup \Dat$ is satisfiable iff so is $\mathcal{F}' \cup \Dat$
for each dataset $\Dat$ over predicates from
$\mathcal{F}$.
\end{definition}

We are especially 
interested in computing Horn rewritings of  ontologies---that is, rewritings
where the given ontology $\Ont_1$ is expressed in a DL $\mathcal{L}_1$
and the rewritten ontology $\Ont_2$ is in a Horn DL $\mathcal{L}_2$
(where preferably $\mathcal{L}_2 \subseteq \mathcal{L}_1$).
This is not possible in general: satisfiability checking
is \coNP-complete in data complexity even for the basic logic $\mathcal{ELU}$ \cite{DBLP:conf/lpar/KrisnadhiL07}, whereas
data complexity is tractable even for highly expressive Horn languages such as
Horn-$\mathcal{SROIQ}$ \cite{DBLP:conf/ijcai/OrtizRS11}. 
Horn rewritability for DLs can be formulated as a decision problem as
follows:
\begin{definition}
The $(\mathcal{L}_1,\mathcal{L}_2)$-Horn rewritability problem for DLs $\mathcal{L}_1$ and $\mathcal{L}_2$
is to decide whether a given $\mathcal{L}_1$-ontology  admits a rewriting
expressed in Horn-$\mathcal{L}_2$. 
\end{definition}
 
Our first result establishes undecidability whenever the
input ontology contains at-most cardinality restrictions and thus equality.
This result is in consonance with the related undecidability results  
by \citeA{BienvenuCLW14} and \citeA{Lutz:2012ug}
for Datalog rewritability and non-uniform data complexity for $\mathcal{ALCF}$ ontologies.
\begin{restatable}{theorem}{hornrewundecidable}
$(\mathcal{L}_1,\mathcal{L}_2)$-Horn rewritability is undecidable for
$\mathcal{L}_1 = \mathcal{ALCF}$ and $\mathcal{L}_2$ any 
DL between $\mathcal{ELU}$
and $\mathcal{ALCHIF}$. This result holds under the assumption that
\textsc{PTime}$\neq$\textsc{NP}.
\end{restatable}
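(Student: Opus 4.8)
The plan is to route the undecidability through data complexity. The starting observation is that every Horn DL between $\mathcal{ELU}$ and $\mathcal{ALCHIF}$ is contained in Horn-$\mathcal{SROIQ}$, which is tractable in data complexity by \citeA{DBLP:conf/ijcai/OrtizRS11}. Since a rewriting preserves satisfiability over every dataset, if an $\mathcal{ALCF}$ ontology $\Ont$ admits a Horn-$\mathcal{L}_2$ rewriting $\Ont'$, then satisfiability of $\Ont \cup \Dat$ coincides with that of $\Ont' \cup \Dat$ and is thus decidable in \textsc{PTime} in $|\Dat|$; that is, $\Ont$ has \textsc{PTime} data complexity. Contrapositively, under the hypothesis \textsc{PTime}$\neq$\textsc{NP}, any $\mathcal{ALCF}$ ontology whose satisfiability problem is $\coNP$-hard in data complexity admits no Horn rewriting at all, since being simultaneously $\coNP$-hard and in \textsc{PTime} would force $\coNP \subseteq$ \textsc{PTime} and hence \textsc{PTime}$=$\textsc{NP}. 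This is the easy half of the argument and is exactly where the complexity-theoretic assumption enters.

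The core of the proof is a reduction from an undecidable problem. I would adapt the constructions underlying the undecidability of Datalog rewritability \citeA{BienvenuCLW14} and of non-uniform data complexity \citeA{Lutz:2012ug} for $\mathcal{ALCF}$: from each instance $I$ of a suitable undecidable problem (for instance, halting encoded through a grid or Turing-machine computation), one builds, uniformly and computably, an $\mathcal{ALCF}$ ontology $\Ont_I$ exhibiting two sharply separated behaviours. Functional roles together with the at-most restrictions available in $\mathcal{ALCF}$ (axioms of type T6) enforce the deterministic backbone of the encoded computation, while the disjunctive axioms (type T1 with $m \geq 2$) carry the nondeterministic guesses whose global consistency mirrors the decision of $I$. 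The design goal is that $\Ont_I$ admits a Horn rewriting whenever $I$ is positive, yet its satisfiability problem is $\coNP$-hard in data complexity whenever $I$ is negative.

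Combining the two ingredients yields the theorem. Assuming \textsc{PTime}$\neq$\textsc{NP}, a negative instance $I$ produces a $\coNP$-hard $\Ont_I$, which by the first paragraph is not Horn rewritable, whereas a positive instance produces a Horn-rewritable $\Ont_I$ by construction. Hence $\Ont_I$ admits a Horn-$\mathcal{L}_2$ rewriting if and only if $I$ is positive, so any decision procedure for $(\mathcal{ALCF},\mathcal{L}_2)$-Horn rewritability would decide the undecidable problem, a contradiction. Because the target logic $\mathcal{L}_2$ enters only through the tractability of its Horn fragment, and $\Ont_I$ can be made Horn-rewritable already within $\mathcal{ELU}$ in the positive case, the argument is uniform across every $\mathcal{L}_2$ between $\mathcal{ELU}$ and $\mathcal{ALCHIF}$.

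I expect the main obstacle to be the reduction itself, and specifically the positive case. The subtlety is that \textsc{PTime} data complexity does not by itself entail Horn rewritability, so it is not enough to reuse a construction that merely exhibits the bare \textsc{PTime}/$\coNP$ dichotomy; one must strengthen it so that the tractable side delivers an \emph{explicit} Horn rewriting in the target DL. Verifying that the disjunctions genuinely collapse to Horn behaviour on positive instances, while the $\coNP$-hardness remains robust on negative instances, is the delicate part, and the deterministic control enforced by functionality is precisely the feature of $\mathcal{ALCF}$ that makes this collapse possible and that weaker logics lack.
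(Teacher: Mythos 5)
Your opening paragraph is sound and coincides with the paper's use of the complexity assumption: a Horn-$\mathcal{L}_2$ rewriting yields \textsc{PTime} data complexity for satisfiability, so under \textsc{PTime}$\neq$\textsc{NP} no $\mathcal{ALCF}$ ontology whose satisfiability problem is \coNP-hard in data can admit a Horn rewriting. The gap is everything after that: the reduction, which is the entire content of the theorem, is only announced (``I would adapt the constructions\dots'') and you yourself flag its positive case as an unresolved obstacle. No concrete family $\Ont_I$ is defined, and neither side of the claimed dichotomy is verified, so what you have is a proof plan whose hardest step is deferred, not a proof.

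Moreover, the specific difficulty you isolate---strengthening the construction so that ``the disjunctions genuinely collapse to Horn behaviour on positive instances'', i.e., exhibiting an explicit nontrivial Horn rewriting on the rewritable side---is a red herring: the paper never produces such a rewriting. Its trick is to make one side of the dichotomy \emph{trivially} rewritable by forcing global unsatisfiability. Starting from the $\mathcal{ALCF}$ ontology $\Ont_1$ and concept name $E$ of \citeA{BienvenuCLW14} for a tiling instance $\Pi$ (with the property that $E$ can be realised in a model of $\Ont_1\cup\Dat$, for some dataset $\Dat$, only if $\Pi$ admits a tiling), the paper adds the axiom $\top\sqsubseteq\exists S.E$, so that if $\Pi$ has no tiling then $\Ont_2\cup\Dat$ is unsatisfiable for \emph{every} $\Dat$ and $\set{\top\sqsubseteq\bot}$ is a Horn-$\ELU$ rewriting; and it adds a 3-colourability gadget over a fresh role $S'$ and fresh concepts $P_1,P_2,P_3$ (the axiom $\exists S'.\top\sqsubseteq P_1\sqcup P_2\sqcup P_3$ together with clash axioms) so that if $\Pi$ does admit a tiling, then datasets encoding graphs make satisfiability of $\Ont_2\cup\Dat$ \NP-hard in data, whence by your first paragraph no Horn rewriting exists. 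Note also that this argument imports a property of the \citeA{BienvenuCLW14} construction strictly stronger than bare undecidability of Datalog rewritability, namely the stated equivalences tying tilings to realisability of $E$; your sketch presupposes that some suitably strengthened construction exists but supplies neither the construction nor this verification, which is precisely where the theorem's work lies.
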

 
Intractability results in data complexity 
rely on the ability of non-Horn DLs to encode \coNP-hard problems,
such as non-3-colourability
\cite{DBLP:conf/lpar/KrisnadhiL07,DBLP:conf/ijcai/HustadtMS05}.  In
practice, however, it can be expected that ontologies do not encode
such problems.  Thus, our focus from now onwards will be on
identifying classes of ontologies that admit (polynomial size) Horn
rewritings.

\section{Program Markability and Transposition} \label{sec:markability-transposition}

In this section, we introduce the class of \emph{markable} programs and show that
every markable program can be rewritten into a Horn program by means of a polynomial 
 transformation, which we refer to as \emph{transposition}.
Roughly speaking, transposition inverts the rules in a program $\P$
by moving certain atoms from head to body and vice versa while 
replacing their corresponding predicates with fresh ones.
Markability of $\P$ ensures that we can pick a set of predicates (a \emph{marking})
such that, by shuffling only atoms with a marked predicate, we obtain a Horn rewriting of $\P$.
Our results in this section generalise the results
by \citeA{KaminskiNCG14RR}
for Disjunctive Datalog to first-order programs with function symbols.

To illustrate our definitions throughout this section, we
use an example
program $\Pex$ consisting of the following rules:
  \begin{align*} 
    A(x)\to B(x)  && B(x)\to C(x)\lor D(x)  \\
    C(x)\to \bot(x)  &&  D(x)\to C(f(x)) 
  \end{align*}
 
\smallskip
\noindent
\textbf{Markability}. The notion of markability involves a partitioning of the program's predicates 
into \emph{Horn} and
\emph{disjunctive}. Intuitively, the former are those 
whose extension
for all datasets depends only on the Horn rules in the program, whereas the latter are those
whose extension may depend on a disjunctive rule. This intuition can be
formalised 
using the standard notion of a dependency graph in Logic Programming.

\begin{definition}\label{def:WL}
  The \emph{dependency graph} $G_\DDP=(V,E,\mu)$ of a program $\DDP$ is the
  smallest edge-labeled digraph such that:
  \begin{inparaenum}[\it(i)]
  \item $V$ contains all predicates in $\DDP$;
  \item $r\in\mu(P,Q)$ whenever 
    $r\in\DDP$, 
    $P$ is in the body of $r$, and $Q$ is in the head of $r$;
    and
  \item $(P,Q)\in E$ whenever $\mu(P,Q) \neq \emptyset$.
  \end{inparaenum}
  A predicate $Q$ \emph{depends on $r\in\DDP$} if $G_\DDP$ has a path
  ending in $Q$ and involving an $r$-labeled edge. Predicate $Q$ is
  \emph{Horn} if it depends only on Horn rules; otherwise, $Q$ is
  \emph{disjunctive}.
 \end{definition}
For instance, predicates $C$, $D$, and $\bot$ are disjunctive in our example program $\Pex$, whereas
$A$ and $B$ are Horn.
We can now introduce the notion of a \emph{marking}---a subset of the disjunctive
predicates in a program $\P$ ensuring that the transposition of $\P$ 
where only marked atoms are
shuffled between head and body results in a Horn program.

\begin{definition}\label{def:marking}
  A \emph{marking} of a program $\P$ is a
  set\/ $M$ of disjunctive predicates in $\P$ satisfying the following properties, where
  we say that an atom is marked if its predicate is in $M$:
  \begin{inparaenum}[\it(i)]
  \item each rule in $\P$ 
    has at most one marked body atom;
  \item each rule in $\P$ 
    has at most one unmarked head atom; and
  \item if\/ $Q\in M$ and $P$ is reachable from $Q$ in $G_\P$, then
    $P\in M$.
  \end{inparaenum}
  We say that a program is
  \emph{markable} if it admits a marking.
\end{definition}

Condition \emph{(i)} in Def.\ \ref{def:marking} ensures
that at most one atom is moved from body to head during
transposition. Condition \emph{(ii)} ensures that
all but possibly one head atom are moved to the body. Finally,
condition \emph{(iii)} requires that all predicates depending on a
marked predicate are also marked.  We can observe that our example
program $\Pex$ admits 
two markings: $M_1=\set{C,\bot}$ and $M_2=\set{C,D,\bot}$.

Markability can be efficiently checked via a 2-SAT reduction, where
we assign to each predicate $Q$ in $\P$ a propositional
variable $X_Q$ and encode the constraints in  Def.\  \ref{def:marking}
as 2-clauses. For each rule 
$\varphi \wedge \bigwedge_{i=1}^n P_i(\vec s_i) \to \bigvee_{j=1}^m Q_j(\vec t_j)$, with $\varphi$ the conjunction
of all Horn atoms in the rule head,
we include clauses \emph{(i)} 
$\neg X_{P_i} \vee \neg X_{P_j}$ for all 
$1 \leq i <j \leq n$, which enforce at most one body atom to be marked;
\emph{(ii)} $X_{Q_i} \vee X_{Q_j}$ for $1 \leq i < j \leq m$, which ensure that
at most one head atom is unmarked; and \emph{(iii)} $\neg X_{P_i} \vee X_{Q_j}$ for $1 \leq i \leq n$
and $1 \leq j \leq m$, which close markings under rule dependencies.
Each model of the resulting clauses yields a marking of $\P$.

\smallskip
\noindent
\textbf{Transposition.} Before defining transposition, 
we
illustrate the main intuitions using 
program $\Pex$ and marking $M_1$. 

The first step to transpose $\Pex$ is to introduce fresh unary
predicates $\overl C$ and $\overl \bot$, which stand for the negation
of the marked predicates $C$ and $\bot$.  To capture the intended
meaning of these predicates, we introduce rules $X(x)\to\overl\bot(x)$
for $X\in\set{A,B,C,D}$ and a rule $\overl\bot(x)\to\overl\bot(f(x))$
for the unique function symbol $f$ in $\Pex$. The first rules mimick
the usual axiomatisation of $\top$ and ensure that an atom $\overl
\bot(c)$ holds in a Herbrand model of the transposed program whenever
$X(c)$ also holds.  The last rule ensures that $\overl \bot$ holds for
all terms in the Herbrand universe of the transposed program---an
additional requirement that is consistent with the intended meaning of
$\overl \bot$, and critical to the completeness of
transposition in the presence of function symbols.  Finally, a rule
$\overl\bot(z)\wedge C(x) \wedge \overl C(x) \to \bot(z)$ ensures that
the fresh predicate $\overl C$ behaves like the negation of $C$
($\overl\bot(z)$ is added for safety).

The key step of transposition is to 
invert the rules involving the marked predicates by 
shuffling  
marked atoms between head and body while replacing 
their predicate with the corresponding fresh one. 
In this way, rule $B(x) \to C(x) \vee D(x)$ yields
$B(x) \wedge \overl C(x) \to D(x)$, and $C(x) \to \bot(x)$ yields
$\overl \bot(x) \to \overl C(x)$. Additionally, rule
$D(x) \to C(f(x))$ is transposed as $\overl\bot(z) \wedge D(x) \wedge \overl C(f(x)) \to \bot(z)$ to ensure
safety. Finally, transposition does not affect rules containing only Horn predicates, e.g., 
rule $A(x) \to B(x)$ is included unchanged. 

  \begin{definition} \label{def:xiprime} %
  Let $M$ be a marking of a program $\DDP$. For each disjunctive predicate
  $P$ in $\DDP$, let $\overl P$ be a fresh predicate of the same arity.
  The $M$-transposition of $\DDP$
  is the smallest program $\Xi_M(\DDP)$ 
  containing every rule in $\DDP$ involving only Horn predicates
  and all rules 1--6 given next, where
  $\fml$ is the conjunction of all Horn atoms in a rule,
  $\varphi_{\top}$ is the least conjunction of $\overl\bot$-atoms making
  a rule safe and all $P_i$, $Q_j$ are disjunctive:
  \begin{enumerate}[labelindent=0pt,leftmargin=*,noitemsep]
  \item $\fml_\top\land\fml\land\bigwedge_{j=1}^m Q_j(\ve
    t_j)\land\bigwedge_{i=1}^n\overl P_i(\ve s_i)\to\overl Q(\ve t)$ for each 
    rule in $\DDP$
    of the form $\fml\land Q(\ve
    t)\land\bigwedge_{j=1}^m Q_j(\ve t_j)\to\bigvee_{i=1}^n P_i(\ve
    s_i)$  where $Q(\ve
    t)$ is the only marked body atom;
  \item $\overl\bot(x) \wedge
    \fml\land\bigwedge_{j=1}^m Q_j(\ve
    t_j)\land\bigwedge_{i=1}^n\overl P_i(\ve s_i)\to \bot(x)$, where
    $x$ a fresh variable, for each rule in $\DDP$ 
    of the form $\fml\land\bigwedge_{j=1}^m Q_j(\ve
    t_j)\to\bigvee_{i=1}^n P_i(\ve s_i)$, with no marked body atoms
    and no unmarked head atoms;
  \item $
    \fml\land\bigwedge_{j=1}^m Q_j(\ve
    t_j)\land\bigwedge_{i=1}^n\overl P_i(\ve s_i)\to P(\ve s)$ for
    each rule in $\DDP$ 
    of the form $\fml\land\bigwedge_{j=1}^m Q_j(\ve
    t_j)\to P(\ve s)\lor\bigvee_{i=1}^n P_i(\ve s_i)$
    where $P(\ve s)$ is the only unmarked head atom;
  \item $\overl\bot(z) \wedge P(\vec x) \wedge \overl P(\vec x) \to \bot(z)$ for marked predicate $P$;
  \item $P(x_1,\dots,x_n)\to\overl\bot(x_i)$ for 
    each $P$ in $\P$ and $1\le i\le n$;
  \item $\overl\bot(x_1)\land\ldots\land\overl\bot(x_n)\to\overl\bot(f(x_1,\ldots,x_n))$ for
    each $n$-ary function symbol $f$ in $\P$.
    \qedhere
  \end{enumerate}
\end{definition}
Note that rules of type~1 in Def.~\ref{def:xiprime} satisfy
$\set{P_1,\dots,P_n}\subseteq M$ since $Q~{\in}~M$, while for rules of
type~3 we have $\set{Q_1,\dots,Q_m}\cap M=\emptyset$ since
$P\notin M$.

Clearly, $\Pex$ is unsatisfiable when extended with fact
$A(a)$. To see that $\Xi_{M_1}(\Pex) \cup \{A(a)\}$ is also unsatisfiable, note that
$B(a)$ is derived by the unchanged rule $A(x) \to B(x)$. Fact $\overl C(a)$
is derived using $A(x) \to \overl \bot(x)$ and the transposed rule
$\overl \bot(x) \to \overl C(x)$. We derive $D(a)$ using
$B(x) \wedge \overl C(x) \to D(x)$. But then, to derive a contradiction we need to
apply rule $\overl\bot(z) \wedge D(x) \wedge \overl C(f(x)) \to \bot(z)$, which 
is not possible unless we derive $\overl C(f(a))$. For this, we first
use $\overl \bot(x) \to \overl \bot(f(x))$, which ensures that $\overl \bot$ holds for $f(a)$, and then $\overl \bot(x) \to \overl C(x)$.


Transposition yields quadratically many Horn rules. 
The following theorem establishes its correctness.

\begin{restatable}{theorem}{transpositioncorrect} \label{thm:rew-correct-markable} %
  Let $M$ be a
  marking of a program $\DDP$. Then $\Xi_M(\DDP)$ is a polynomial-size Horn
  rewriting of\/ $\DDP$.
\end{restatable}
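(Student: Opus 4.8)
I would prove the statement in three parts, of which only the rewriting property is substantial. \textbf{Horn-ness and size} are routine and I would dispatch them first. By inspection of Definition~\ref{def:xiprime}, each of the rules 1--6 has exactly one head atom. Every rule copied over unchanged involves only Horn predicates and is therefore itself Horn: by Definition~\ref{def:WL} any head atom of a disjunctive rule is disjunctive, so a rule all of whose predicates are Horn cannot have a disjunctive head. For the size bound, each disjunctive rule of $\DDP$ contributes exactly one rule of type 1, 2 or 3 (the marking conditions single out the unique marked body atom, resp.\ the unique unmarked head atom), while rules of types 4--6 are generated once per marked predicate, per predicate-and-argument-position, and per function symbol; this gives the claimed quadratic bound.

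For the rewriting property I would first reduce to Herbrand models. Since $\DDP$ and $\Xi_M(\DDP)$ share the same constants and function symbols, they share a Herbrand universe $U$, so by Herbrand's theorem it suffices to relate their Herbrand models for each dataset $\Dat$. As $\Dat$ mentions only predicates of $\DDP$ and each $\overl{P}$ is fresh, the extensions of the barred predicates may be chosen freely. The engine of the argument is a complementation correspondence in which $\overl{P}$ is read as the complement of the marked predicate $P$, and $\overl{\bot}$ as a domain predicate recording the terms on which this complementation is taken.

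I would then handle the two directions symmetrically. For ``$\DDP\cup\Dat$ satisfiable $\Rightarrow\Xi_M(\DDP)\cup\Dat$ satisfiable'', from a Herbrand model $I$ of $\DDP\cup\Dat$ I would build $J$ keeping the original predicates as in $I$, setting $\overl{\bot}^J := U$ and, for marked $P$, $\overl{P}^J := \{\vec t \mid \vec t \notin P^I\}$. Every transposed rule then holds in $J$ because, up to renaming marked atoms to their barred versions, it is a classical contrapositive of an original rule: a type-1 rule fires only when its unbarred premises hold and all $\overl{P_i}$ hold (so no $P_i$ holds in $I$), whence the original disjunctive rule forces the marked body atom $Q$ to fail, i.e.\ $\overl{Q}$ to hold. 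Conversely, from a Herbrand model $J$ of $\Xi_M(\DDP)\cup\Dat$ I would define $I$ by keeping unmarked predicates and, for marked $P$, putting $\vec t \in P^I$ iff every argument of $\vec t$ lies in $\overl{\bot}^J$ and $\overl{P}(\vec t)\notin J$. Rule 4 guarantees that $P$ and $\overl{P}$ never hold simultaneously on domain terms, so this is well defined, and the same contrapositive reasoning shows that $I$ satisfies each original rule. The safety atoms $\fml_\top$ are satisfied because $\overl{\bot}$ covers every domain argument, and congruence of the barred predicates transfers from that of $I$ and $J$.

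The genuinely new difficulty, compared with the Disjunctive Datalog case of \citeA{KaminskiNCG14RR}, is the presence of function symbols. There the domain may be taken to be the finite set of constants and complementation is immediate; here $U$ is infinite, and the complement of a marked predicate must be relativized to the domain predicate $\overl{\bot}$, which rule 6 forces to be closed under function application and rule 5 forces to contain every argument of every derived atom. The crux is to show that this relativized complement still validates the transposed rules when a rule head contains a functional term (as in $D(x)\to C(f(x))$): using rules 5 and 6 one must argue that whenever the body of such a rule holds in the reconstructed model, $\overl{\bot}$ already holds of the relevant functional terms, so that the needed barred atom is available. Establishing this closure argument, and checking its compatibility with the auxiliary axioms for $\top$, $\bot$ and $\equality$, is where essentially all the effort lies; the remaining bookkeeping is lifted directly from the Disjunctive Datalog proof.
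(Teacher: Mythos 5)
Your proposal is essentially correct, but it takes a genuinely different route from the paper. The paper's proof is proof-theoretic: it works with hyperresolution derivations and establishes two inductive claims---$(\diamondsuit)$, showing that for every disjunctive atom $R(\vec t)$ in a derivation from $\DDP\cup\Dat$ the transposed program entails $\overl R(\vec t)$ if $R\in M$ and $R(\vec t)$ otherwise, and $(\clubsuit)$, showing that whenever $Q(\vec t)$ (resp.\ $\overl Q(\vec t)$) is derivable from $\Xi_M(\DDP)\cup\Dat$ one has $\DDP\cup\Dat\models Q(\vec t)$ (resp.\ $\DDP\cup\Dat\models\neg Q(\vec t)$). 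Your direct Herbrand-model complementation correspondence replaces both inductions by rule-by-rule semantic verifications and is arguably more elementary and symmetric; the paper's derivation-based route, in exchange, never has to construct or repair a model and isolates all reasoning about $\overl\bot$ in a single proposition.

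Two repairs are needed for your construction to go through literally. First, in the left-to-right direction, ``keeping the original predicates as in $I$'' is inconsistent with the fact that $\Xi_M(\DDP)$ is itself a \emph{program}: its $\toppart{}$ component contains $\overl\bot(x)\to\top(x)$ and $\overl P(\vec x)\to\top(x_i)$, so $\overl\bot^J=U$ forces $\top^J=U$, and reflexivity then forces the full diagonal into ${\equality^J}$. You must therefore set $\top^J:=U$ and $\equality^J:=\equality^I$ extended by the diagonal, and re-verify $\eqpart{}$ (routine, since complements of congruence-closed relations are congruence-closed). Second, your announced ``crux'' in the converse direction largely dissolves: as recorded in Proposition~\ref{prop:bot-bar-axiomatisation}, $\Xi_M(\DDP)\cup\Dat\models\overl\bot(s)$ for \emph{every} ground term $s$---constants receive $\overl\bot$ via the $\toppart{}$ facts together with rules of type~5, and rule~6 closes $\overl\bot$ under all function symbols---so in any Herbrand model $\overl\bot^J$ is the whole universe and your relativised complement is the plain complement. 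This global statement also repairs a small edge case in your per-rule closure argument: a variable occurring in a body only below a function symbol is not reached by rule~5 alone. Finally, note that your reconstruction needs rule~4 to guarantee $P^J\subseteq P^I$ for marked $P$ (in particular so that marked facts of $\Dat$ hold in $I$), and that the case $\bot\in M$ is unproblematic because the empty-head rule in $\botpart{}$ of $\Xi_M(\DDP)$ keeps $\bot^J=\emptyset$. With these adjustments your argument is a valid alternative proof.
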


It follows that
every markable set of non-Horn clauses $\mathcal{N}$ can be polynomially transformed into a 
set of Horn clauses $\mathcal{N}'$ such that $\mathcal{N} \cup \Dat$ and 
$\mathcal{N}' \cup \Dat$ are equisatisfiable for every set of facts $\Dat$.
This result is rather general and
has potential applications in first-order theorem proving, as well as in
knowledge compilation, where Horn clauses are especially relevant \cite{Kcomp,DBLP:journals/ai/Val05}.


\section{Markability of DL Ontologies}\label{sec:markability-DLs}

The notion of markability 
is applicable to 
first-order programs and hence
can be seamlessly adapted to ontologies via the
standard translation $\pi$ in Table \ref{tab:RL}.
This, however, would be of
limited value since the Horn programs resulting
from  
transposition  
may not be expressible in Horn-$\mathcal{ALCHIF}$.

Consider any ontology
with an axiom $\exists R.A \sqsubseteq B$
and any marking $M$ involving $R$.
Rule $R(x,y) \wedge A(y) \to B(x)$ stemming from $\pi$ 
would be transposed as $\overl B(x) \wedge A(y) \to \overl R(x,y)$, which
cannot be captured in $\mathcal{ALCHIF}$.\footnote{Capturing such a rule would require a DL that can express products of concepts
\cite{DBLP:conf/dlog/RudolphKH08}.}

\begin{table*}[t]
\begin{footnotesize}
\[
\renewcommand*{\arraystretch}{1.23}
\begin{array}{cl@{\hskip11pt}l@{\hskip15pt}l@{\hskip15pt}l}
& \text{Ontology~} \Oex & \text{Rule translation}~\xi(\Oex) & \text{Transposition}~ \Xi_{\Mex}(\xi(\Oex)) & \text{Horn DL rewriting}~\Psi(\Xi_{\Mex}(\xi(\Oex)))\smallskip\\ \hline  
\raisebox{0pt}[2.5\height][0pt]{$\alpha_1$} &A \sqsubseteq B \sqcup C & A(x) \to B(x) \vee C(x) & A(x) \wedge \overl B(x) \to C(x)              & A \sqcap \overline B \sqsubseteq  C \\
\alpha_2 & B \sqsubseteq \exists R.D & B(x) \to D(f_{R,D}(x)) & \overl D(f_{R,D}(x))\to\overl B(x)           & \exists R_D.\overl D \sqsubseteq \overline B \\
\alpha_3 & \exists R.D \sqsubseteq D & R(x,y) \wedge D(y) \to D(x) & R(x,y)\land\overl D(x)\to\overl D(y)  & \overl D \sqsubseteq \forall R.\overline D \\
         &                    & D(f_{R,D}(x)) \to D(x)      & \overl D(x)\to\overl D(f_{R,D}(x))              & \overl D \sqsubseteq \forall R_D \overline D    \\ 
         &                    & D(f_{R,B}(x)) \to D(x)      & \overl D(x)\to\overl D(f_{R,B}(x))              & \overl D \sqsubseteq \forall R_B \overline D    \\ 
\alpha_4 & C \sqsubseteq \exists R.B    & C(x) \to B(f_{R,B}(x))         & \overl \bot(z) \wedge C(x) \wedge \overl B(f_{R,B}(x)) \to \bot(z)   &  C \sqcap \exists R_B.\overline B \sqsubseteq \bot \\
\alpha_5 & D \sqcap E \sqsubseteq \bot    & D(x) \wedge E(x) \to \bot(x)    & E(x)\land \overl \bot(x) \to \overl D(x)  &  E \sqcap \overl \bot \sqsubseteq \overline D \\
		 &					& 						   & X(x)\to\overl\bot(x),~~X\in\set{A,B,C,D,E}  & X \sqsubseteq \overline \bot \\
                 &                                      &
& R(x_1,x_2)\to\overl\bot(x_i),~~1\le i\le 2  & \top \sqsubseteq\forall R. \overl \bot,~~ \exists R.\top \sqsubseteq \overline \bot \\
		 &					&						   & \overl\bot(x)\to\overl\bot(f_{R,Y}(x)),~~Y \in \{B,D\} & \overl \bot \sqsubseteq \exists R_Y. \overline \bot 
\end{array}
\]
  \caption{Rewriting the example $\mathcal{ELU}$ ontology $\Oex$ into a Horn-$\ALC$ ontology using the marking $\Mex = \{B,D,\bot\}$.}
  \label{tab:ex-transf}
\end{footnotesize}
\end{table*}

To address this limitation we 
introduce an alternative translation $\xi$ from 
DL axioms into rules, which we illustrate using 
the example ontology $\Oex$ in Table \ref{tab:ex-transf}.
The key idea is to encode 
existential restrictions in axioms T3
as unary atoms over functional terms.
For instance, axiom $\alpha_2$ in $\Oex$ would 
yield 
$B(x) \to D(f_{R,D}(x))$, where the ``successor'' relation 
between an instance $b$ of $B$
and some instance 
of $D$ in a Herbrand 
model is encoded as a term $f_{R,D}(b)$, instead
of a binary atom of the form $R(b,g(b))$. 
This encoding has an immediate impact
on markings: by marking $B$ we are only forced to also mark $D$ (rather than both $R$ and $D$). 
In this way,
we will be able to ensure that markings consist of unary predicates only.

To compensate for the lack of binary atoms involving functional terms
in Herbrand models, we introduce new rules when translating 
axioms T2, T4, and T6 using $\xi$.
For instance,  $\xi(\alpha_3)$ 
yields the following rules in addition to $\pi(\alpha_3)$: 
a rule $D(f_{R,D}(x)) \to D(x)$ to ensure that all objects $c$
with an $R$-successor $f_{R,D}(c)$ generated by $\xi(\alpha_2)$ are instances of $D$;
a rule $D(f_{R,B}(x)) \to D(x)$, which makes sure that an
object whose $R$-successor generated by $\xi(\alpha_4)$ is an instance of $D$
is also an instance of $D$. Finally, 
axioms $\alpha_1$ and $\alpha_5$, which involve no binary predicates, are translated as usual.

  \begin{definition} \label{def:little-xi}
  Let $\Ont$ be an ontology. For each 
  concept $\exists R.B$ in an axiom of type T3, let $f_{R,B}$
  be a unary
  function symbol, and $\Phi$ the set of all such symbols. 
  We define $\xi(\Ont)$ as the
  smallest program containing 
  $\pi(\alpha)$ for each axiom $\alpha$ in $\Ont$ of type T1-T2 and T4-T6,
  as well as the following rules:
  \begin{itemize}[labelindent=0pt,leftmargin=*,noitemsep]
    \item $A(x) \to B(f_{R,B}(x))$ for each axiom  T3;
     \item $A(f_{R',Y}(x)) \to C(x)$ for each axiom T2 and  $R'$ and $Y$ s.t. $f_{R',Y} \in \Phi$ and $R'\sqsubseteq^{*} R$.
   \item $A(f_{\mathsf{inv}(R'),Y}(x)) \to C(x)$ for each axiom T4 and  $R'$ and $Y$ s.t. $f_{\mathsf{inv}(R'),Y} \in \Phi$ and $R'\sqsubseteq^{*} R$.
   \item $A(x) \wedge Y(f_{\mathsf{inv}(R'),Y}(x)) \to
     C(f_{\mathsf{inv}(R'),Y}(x))$ for each axiom T2 and $R'$
     and $Y$ s.t.\ $f_{\mathsf{inv}(R'),Y} \in \Phi$ and
     $R'\sqsubseteq^{*} R$.
  \item $A(x) \wedge Y(f_{R',Y}(x)) \to C(f_{R',Y}(x))$ for each axiom T4 and $R'$ and $Y$ s.t.\ $f_{R',Y} \in \Phi$ and $R'\sqsubseteq^{*} R$.
  \item $A(z) \wedge B(f_{R',Y}(z)) \wedge \Inv{R}{z}{x} \wedge B(x)
    \to f_{R',Y}(z) \approx x$ for each ax.\ T6 and $R', Y$ s.t.\
    $f_{R',Y} \in \Phi$ and $R'\sqsubseteq^{*} R$.
  \item $A(f_{\mathsf{inv}(R'),Y}(x))\wedge B(x) \wedge \Inv{R}{f_{\mathsf{inv}(R'),Y}(x)}{y} \wedge B(y)
    \to x \approx y$ for each axiom T6 and $R'$ and $Y$ s.t.\ 
    $f_{\mathsf{inv}(R'),Y} \in \Phi$ and $R'\sqsubseteq^{*} R$.
  \item $A(z) \wedge B(f_{R'_1,Y_1}(z)) \wedge B(f_{R'_2,Y_2}(z)) \to
    f_{R'_1,Y_1}(z) \approx f_{R'_2,Y_2}(z)$ for each axiom T6 and
    $f_{R'_i,Y_i} \in \Phi$ s.t.\ ${R_i'\sqsubseteq^{*} R}$.
  \item $A(f_{\mathsf{inv}(R'_1),Y_1}(x)) \wedge B(x) \wedge
    B(f_{R'_2,Y_2}(f_{\mathsf{inv}(R'_1),Y_1}(x))) \to x \approx
    f_{R'_2,Y_2}(f_{\mathsf{inv}(R'_1),Y_1}(x))$ for each axiom T6 and
    each $R'_i$ and $Y_i$ s.t.\
    $\set{f_{\mathsf{inv}(R'_1),Y_1},f_{R'_2,Y_2}}\subseteq\Phi$ and
    ${R_i'\sqsubseteq^{*} R}$.
    \qedhere
  \end{itemize}
  \end{definition}
Note that, in contrast to the standard translation $\pi$, 
which introduces at most two rules per DL axiom,
$\xi$ can introduce linearly many rules in the size of the role
hierarchy induced by axioms of type T5.

The translation $\xi(\Oex)$ of our example ontology $\Oex$ is
given in the second column of Table \ref{tab:ex-transf}.
Clearly, $\Oex$ is unsatisfiable when extended with $A(a)$ and $E(a)$.
We can check that $\xi(\Oex) \cup \{A(a),E(a)\}$ is also unsatisfiable.
The following theorem establishes the correctness of $\xi$.

\begin{restatable}{theorem}{translationcorrect} \label{thm:xi-correct}
  For every ontology $\Ont$ and dataset $\Dat$ over predicates in
  $\Ont$ we have that $\Ont \cup \Dat$ is satisfiable iff so is
  $\xi(\Ont) \cup \Dat$.
\end{restatable}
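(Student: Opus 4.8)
The plan is to reduce the statement to a comparison between $\xi(\Ont)$ and the standard translation $\pi$: since by definition $\Ont\cup\Dat$ is satisfiable iff $\pi(\Ont)\cup\Dat$ is, it suffices to show that $\pi(\Ont)\cup\Dat$ and $\xi(\Ont)\cup\Dat$ are equisatisfiable. I would prove this by exhibiting model transformations in both directions that leave the dataset and all concept-name extensions untouched and differ only in how the ``witness'' edges generated by existential axioms of type T3 are represented---either as binary role atoms (in $\pi$) or as functional terms $f_{R,B}$ (in $\xi$).

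For the forward direction, start from a model $\mathcal{I}\models\pi(\Ont)\cup\Dat$; note that $\mathcal{I}$ already interprets, for each axiom $A\sqsubseteq\exists R.B$ of type T3, a function $f^{\pi}$ for which $A(x)\to\Inv{R}{x}{f^{\pi}(x)}$ and $A(x)\to B(f^{\pi}(x))$ hold, so $f^{\pi}(d)$ is a genuine $R$-successor of $d$ in $B$ whenever $d\in A^{\mathcal{I}}$. I build $\mathcal{J}$ on the domain $\Delta^{\mathcal{I}}$ extended by one fresh ``null'' element $\star$ lying in no predicate, keep all concept, role and $\approx$ extensions of $\mathcal{I}$ (adding only $\star\approx\star$), and set $f_{R,B}^{\mathcal{J}}(d)=f^{\pi}(d)$ if $d\in A^{\mathcal{I}}$ and $f_{R,B}^{\mathcal{J}}(d)=\star$ otherwise. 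The role of the null element is to make the total functions $f_{R,B}$ harmless exactly where no witness exists: whenever a rule of $\xi(\Ont)$ has a body mentioning $f_{R,B}(x)$, either its premise is vacuously false (the argument lies outside the domain concept, so $f_{R,B}$ returns $\star$), or $f_{R,B}(x)$ denotes a real $R'$-successor with $R'\sqsubseteq^{*}R$, in which case satisfaction of the rule follows directly from the corresponding $\pi$-rule (of type T2, T4 or T6) holding in $\mathcal{I}$. Congruence is preserved for free because $f^{\pi}$ is already $\approx^{\mathcal{I}}$-compatible and $A^{\mathcal{I}}$ is $\approx^{\mathcal{I}}$-closed. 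This direction is essentially a soundness check: each extra rule introduced by $\xi$ is a logical consequence of $\pi(\Ont)$ under the intended reading of the $f_{R,B}$.

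For the converse, start from $\mathcal{J}\models\xi(\Ont)\cup\Dat$ and reconstruct $\mathcal{I}\models\pi(\Ont)\cup\Dat$ on the same domain, reusing $f^{\pi}:=f_{R,B}^{\mathcal{J}}$ together with the same concept-name and $\approx$ interpretations, but enlarging each role: to $R^{\mathcal{J}}$ I add, for every T3 axiom $A\sqsubseteq\exists S.B$ with $S\sqsubseteq^{*}R$ and every $d\in A^{\mathcal{J}}$, the witness pair making $\Inv{S}{d}{f_{S,B}(d)}$ hold, and then close under the role hierarchy. By construction the existential axioms T3 are satisfied in $\mathcal{I}$, while for the remaining axiom types I verify each $\pi$-rule by casing on whether the $R$-edges involved are original atoms of $\mathcal{J}$ (discharged because $\mathcal{J}\models\pi(\alpha)$) or newly added witness edges (discharged by the matching extra rule of Def.~\ref{def:little-xi}, which fires precisely because $S\sqsubseteq^{*}R$).

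The main obstacle I expect is the functionality case (axioms of type T6) in this converse direction: I must show that adding the witness edges never creates two $R$-$B$-successors of an $A$-element that fail to be $\approx$-related. This requires the extra T6 rules of Def.~\ref{def:little-xi} to be exhaustive over all the ways such a pair can arise---two witness successors, a witness and a named successor, and the nested/inverse combinations captured by the last two bullets---and it is the interaction of functionality with inverse roles (where one functional term occurs inside another, or an $\mathsf{inv}(R')$-edge must be reconstructed) that makes the bookkeeping delicate. Once this case analysis is shown to be complete, $\pi(T6)$ holds in $\mathcal{I}$ because every forced equality is already present as an $\approx$-atom in $\mathcal{J}$, and the theorem follows.
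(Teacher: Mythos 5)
Your overall strategy coincides with the paper's: the forward direction is essentially the paper's construction (the paper extends the domain of $\modI$ by a single fresh individual $u$ and sets $f_{R,B}^{\modJ}(v)=u$ whenever no witness exists, then verifies the extra rules of Definition~\ref{def:little-xi} case by case), and your version of it is sound up to two pieces of bookkeeping: several T3 axioms $A_1\sqsubseteq\exists R.B$ and $A_2\sqsubseteq\exists R.B$ share the \emph{same} symbol $f_{R,B}$, so your guard ``$d\in A^{\modI}$'' must be read as membership in the union of all the left-hand concepts (or, as the paper does, replaced by nonemptiness of the witness set); and since $\eqpart{\xi(\Ont)}$ axiomatises $\approx$ as a congruence over a signature containing the $f_{R,B}$, the $\approx$-compatibility of your choice of witnesses has to be checked, which your appeal to congruence of the Skolem functions of $\pi(\Ont)$ does handle.

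The genuine gap is in the converse direction, at exactly the point you flag as delicate. You start from an \emph{arbitrary} model $\modJ\models\xi(\Ont)\cup\Dat$ and claim that once the T6 case analysis is complete, ``every forced equality is already present as an $\approx$-atom in $\modJ$.'' This is false for the case of two inverse-role witness edges colliding at a shared source: if $v=f_{\mathsf{inv}(R'_1),Y_1}^{\modJ}(u)=f_{\mathsf{inv}(R'_2),Y_2}^{\modJ}(w)$ with $u\ne w$, your reconstruction gives $v$ the two added $R$-successors $u$ and $w$, yet \emph{no} rule of Definition~\ref{def:little-xi} covers this configuration (the T6-derived rules all require the second successor to come from an explicit $\Inv{R}{\cdot}{\cdot}$-atom or a forward term $f_{R',Z}(\cdot)$), so nothing forces $u\approx w$. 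Concretely, for $\Ont=\set{A\sqsubseteq\exists R^-.Y,\ A'\sqsubseteq{\le}1\,R.B}$ there is a non-Herbrand model $\modJ$ with $A^{\modJ}=B^{\modJ}=\set{e,e'}$, $A'^{\modJ}=Y^{\modJ}=\set{v}$, $R^{\modJ}=\emptyset$, $\approx^{\modJ}$ the identity, and $f_{R^-,Y}^{\modJ}(e)=f_{R^-,Y}^{\modJ}(e')=v$: every rule of $\xi(\Ont)$ holds (the bodies of the relevant T6 rules mention $R$-atoms or forward function symbols $f_{R',Z}$ with $R'\sqsubseteq^{*}R$, none of which exist here), but your $\modI$ contains $(v,e),(v,e')\in R^{\modI}$ and violates the functionality axiom. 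The missing idea, which is how the paper closes this case, is to take $\modJ$ to be a (minimal) \emph{Herbrand} model---available because $\xi(\Ont)\cup\Dat$ is a set of universal clauses---so that distinct function symbols have disjoint ranges and each $f_{\mathsf{inv}(R'),Y}^{\modJ}$ is injective; then $f_{\mathsf{inv}(R'_1),Y_1}(u)=f_{\mathsf{inv}(R'_2),Y_2}(w)$ forces syntactic identity and $u=w$, and the case is discharged by reflexivity of $\approx$ rather than by any rule. With that restriction in place, your case analysis does become exhaustive and the rest of your argument matches the paper's proof.
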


This translation has a clear benefit for markability checking: in
contrast to $\pi(\Ont)$, binary predicates in
$\xi(\Ont)$ 
do not belong to any minimal marking. 
In particular, $\Mex = \{B,D,\bot\}$ is the only minimal marking of
$\xi(\Oex)$.
\begin{restatable}{proposition}{xiroleshorn} \label{prop:xi-roles-unmarked}
 \begin{inparaenum}[\it (i)] 
 \item If\/ $\equality$ is Horn in\/ $\xi(\Ont)$ then so are all binary
   predicates in\/ $\xi(\Ont)$.
 \item If $\xi(\Ont)$ is markable, then it has a marking
 containing only unary predicates.
 \end{inparaenum}
\end{restatable}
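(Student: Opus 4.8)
The plan is to read off both statements from the structure of the dependency graph $G_{\xi(\Ont)}$, whose crucial feature is that the equality predicate is the unique ``gateway'' between the unary and the binary predicates. I would first note that the only non-Horn rules of $\xi(\Ont)$ are those of type T1, and that their atoms are all unary; hence every path witnessing disjunctiveness must begin inside the unary fragment. Next I would classify every rule of $\xi(\Ont)$ whose head is a binary atom. These split into two kinds: those with a role atom in the head — the inclusions $S(x,y)\to\Inv{R}{x}{y}$ of type T5 together with the role-substitution rules of the equality axiomatisation — in which every body atom is again binary; and those with an equality atom in the head, all of which produce the single predicate $\equality$. Consequently the only binary predicate with an incoming edge from a unary predicate is $\equality$ (supplied, for instance, by the reflexivity rule $\top(x)\to x\equality x$ and by the rules translating T6). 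Therefore, on every path of $G_{\xi(\Ont)}$ that leaves the unary predicates and enters the binary ones, the first binary predicate met is $\equality$.

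For part \emph{(i)} I would argue by contraposition. Assume some binary predicate $R$ is disjunctive, so there is a path $\rho$ ending in $R$ that traverses a T1-labelled edge. That edge joins two unary predicates, whereas $R$ is binary, so $\rho$ must cross from the unary to the binary fragment; by the observation above, the first binary predicate it reaches is $\equality$. Since the T1-labelled edge lies wholly within the unary fragment, it occurs in the prefix of $\rho$ ending at $\equality$, and this prefix witnesses that $\equality$ is itself disjunctive. Contrapositively, if $\equality$ is Horn then no binary predicate can be disjunctive, which is the claim.

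For part \emph{(ii)} the decisive preliminary fact is that $\equality$ can belong to no marking at all: the transitivity rule $x\equality y \wedge y\equality z \to x\equality z$ has two equality atoms in its body, so putting $\equality \in M$ would produce a rule with two marked body atoms, contradicting condition \emph{(i)} of Definition~\ref{def:marking}. Given any marking $M$ of $\xi(\Ont)$, I would let $M'$ be the set of unary predicates in $M$ and verify the three conditions. Condition \emph{(i)} is preserved because deleting predicates only lowers the number of marked body atoms, and condition \emph{(ii)} is preserved because it constrains only the multi-head T1 rules, whose heads are purely unary and hence unaffected by removing binary predicates. The one substantive point is closure condition \emph{(iii)}: were some unary $Q\in M'$ to reach a binary predicate, then by the gateway property it would already reach $\equality$, forcing $\equality\in M$ by condition \emph{(iii)} for $M$ and contradicting $\equality\notin M$. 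Hence no unary predicate of $M$ reaches any binary predicate, every predicate reachable from $M'$ is again unary, and condition \emph{(iii)} transfers to $M'$. Thus $M'$ is a marking of $\xi(\Ont)$ using only unary predicates.

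The main obstacle is not the reachability arguments themselves but the exhaustive bookkeeping that establishes the gateway property: one must check that, across all rule schemas produced by $\xi$ in Definition~\ref{def:little-xi} and by the equality axiomatisation, no binary predicate other than $\equality$ ever receives an edge from a unary predicate, and that $\equality$ is forced out of every marking by transitivity. Once these two facts are in hand, parts \emph{(i)} and \emph{(ii)} both follow from the short path arguments above.
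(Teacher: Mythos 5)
Your proof is correct and takes essentially the same route as the paper's: both rest on the observations that the only non-Horn rules of $\xi(\Ont)$ are T1 rules over unary predicates, that the only rules leading from unary body atoms to a binary head atom have $\approx$ in the head, and that the transitivity rule excludes $\approx$ from every marking. The only cosmetic difference is that you explicitly verify that the unary part of an arbitrary marking is again a marking, where the paper phrases the same point via minimal markings.
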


Thus, we define markability of ontologies in terms of $\xi$
rather than in terms of $\pi$.
We can check that $\pi(\Oex)$ is not markable, whereas
$\xi(\Oex)$ admits the marking $\Mex$.

\begin{definition}
An ontology $\Ont$ is markable if so is $\xi(\Ont)$.
\end{definition}



We conclude this section with the observation that markability 
of an ontology $\Ont$ can be efficiently checked by first computing the program
$\xi(\Ont)$ and then exploiting the 2-SAT encoding
sketched in Section \ref{sec:markability-transposition}.

\section{Rewriting Markable Ontologies} \label{sec:rewriting}

It follows from the correctness of transposition in
Theorem~\ref{thm:rew-correct-markable} and $\xi$
in Theorem \ref{thm:xi-correct} that every $\mathcal{ALCHIF}$ ontology $\Ont$
admitting a marking $M$ has a 
Horn rewriting  of polynomial size given as the program $\Xi_M(\xi(\Ont))$. 
In what follows, we show that this rewriting can be expressed within
Horn-$\mathcal{ALCHIF}$. 

Let us consider the transposition 
of $\xi(\Oex)$ via the marking $\Mex$, which is
given in the third column of Table \ref{tab:ex-transf}.
The transposition of $\alpha_1$ and $\alpha_5$ 
corresponds directly to DL axioms via the standard translation in
Table \ref{tab:RL}. In contrast, the transposition of all 
other axioms leads to rules
that have no direct correspondence in DLs. The following lemma establishes
that the latter rules are restricted to the types T7-T20 specified 
on the left-hand side of Table
\ref{tab:phi}.

\begin{restatable}{lemma}{xiruleform} \label{lem:rew-rule-form} %
  Let $\Ont$ be an ontology and\/ $M$ a minimal marking of\/
  $\xi(\Ont)$.  Then\/ $\Xi_M(\xi(\Ont))$ contains only Horn rules of
  type T1-T2 and T4-T6 in Table\:\ref{tab:RL} as well as 
  type T7-T20 in Table\:\ref{tab:phi}.
\end{restatable}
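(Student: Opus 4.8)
The lemma claims that transposing $\xi(\Ont)$ via a minimal marking $M$ produces only Horn rules of specific types: T1–T2 and T4–T6 from Table~\ref{tab:RL} plus new types T7–T20 from Table~\ref{tab:phi}. So I need to understand what rules appear in $\xi(\Ont)$, then track what transposition does to each.

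Let me think about this carefully.

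**Understanding the two transformations.**

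First, $\xi(\Ont)$ produces rules from DL axioms T1–T6. From Definition~\ref{def:little-xi}:
- T1: $\bigwedge A_i(x) \to \bigvee C_j(x)$ (unary disjunctive rules)
- T2: $\Inv{R}{x}{y} \wedge A(y) \to C(x)$ (standard) PLUS new rules like $A(f_{R',Y}(x)) \to C(x)$ and $A(x) \wedge Y(f_{\mathsf{inv}(R'),Y}(x)) \to C(f_{\mathsf{inv}(R'),Y}(x))$
- T3: instead of the standard two rules, just $A(x) \to B(f_{R,B}(x))$
- T4: $A(x) \wedge \Inv{R}{x}{y} \to C(y)$ PLUS new rules with functional terms
- T5: $S(x,y) \to \Inv{R}{x}{y}$
- T6: the equality rules PLUS various functional-term equality rules

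Then $\Xi_M$ (Definition~\ref{def:xiprime}) transposes, shuffling marked atoms between head and body with barred predicates, using rule types 1–6.

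By Proposition~\ref{prop:xi-roles-unmarked}, since $M$ is minimal and $\xi(\Ont)$ is markable, $M$ contains only unary predicates. This is crucial: binary predicates and equality are Horn (unmarked), so they never get barred or shuffled.

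**The plan.**

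The plan is to argue by exhaustive case analysis over the rule types in $\xi(\Ont)$. First I would establish, via Proposition~\ref{prop:xi-roles-unmarked}, that the marking $M$ contains only unary predicates, so every binary atom and every equality atom remains unmarked throughout transposition. Then I would enumerate the rules produced by $\xi$ for each axiom type T1–T6 (as listed in Definition~\ref{def:little-xi}) together with the housekeeping rules of $\Xi_M$ (namely rules 4–6 of Definition~\ref{def:xiprime}, the $\overl\bot$-axiomatisation rules, and the $\overl P/P$ consistency rules). For each such rule, I would determine which atoms are marked, apply the appropriate transposition case from Definition~\ref{def:xiprime} (cases 1–3 depending on whether the rule has a marked body atom, no marked head atom, etc.), and read off the resulting Horn rule's shape. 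My claim will be that each resulting shape falls into T1–T2, T4–T6, or one of the new patterns T7–T20.

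The key observation driving the case analysis is that $\xi$ is specifically designed so that existential successors are encoded by unary atoms over functional terms rather than binary atoms; consequently every rule in $\xi(\Ont)$ whose head or body mentions a marked (hence unary) atom has that atom ranging over a term of the form $x$ or $f_{R,Y}(x)$. When transposition bars such an atom and moves it across the arrow, the result is again a unary rule over the same kind of functional terms, which is exactly the form that can be read back as a DL concept or value restriction (the $R_Y$ "successor roles" in the rightmost column of Table~\ref{tab:ex-transf} illustrate this). I would verify that the equality rules from T6, after transposition of their unary $B$-atoms (the only marked atoms, since $\equality$ is unmarked), still have equality in the head and so remain among the allowed functional-term equality types.

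**The main obstacle.**

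The hard part will be handling the T6 (functionality) rules exhaustively. These rules in $\xi(\Ont)$ come in several syntactic variants involving different nestings of functional terms $f_{R',Y}$ and $f_{\mathsf{inv}(R'),Y}$, including the doubly-nested case $f_{R'_2,Y_2}(f_{\mathsf{inv}(R'_1),Y_1}(x))$. For each, I must check which $B$-atoms are marked, confirm that at most one is marked so that transposition case~1 or case~3 applies cleanly, and verify that the transposed rule still matches one of T7–T20. A secondary subtlety is confirming that the safety-enforcing $\overl\bot(z)$ prefix introduced by transposition cases~2 and~4 does not push any rule outside the permitted types; since $\overl\bot$ is unary and behaves like a universal concept, I expect these to fold into the $\exists R_Y.\overl\bot \sqsubseteq \bot$ and similar patterns rather than introduce genuinely new shapes. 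The bookkeeping is tedious but mechanical; the essential content is the stability of the "unary-atoms-over-functional-terms" format under transposition, which is guaranteed by the unary marking from Proposition~\ref{prop:xi-roles-unmarked}.
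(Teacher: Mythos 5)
Your proposal takes essentially the same route as the paper: the paper's proof is exactly a case analysis over the possible rule types in $\xi(\Ont)$ (per Definition~\ref{def:little-xi}, plus the housekeeping rules of Definition~\ref{def:xiprime}) and the possible minimal markings for each, exploiting Proposition~\ref{prop:xi-roles-unmarked}\,\textit{(ii)} that minimal markings involve no binary predicates. One micro-correction to your T6 discussion: no body $B$-atom of those rules can be marked at all, since marking condition \textit{(iii)} of Definition~\ref{def:marking} would then force $\equality$ into the marking, which is impossible; hence the T6 rules pass through transposition unchanged (transposition case~3 with no marked atoms), a fact your planned check would uncover.
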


\begin{table*}[t]
\begin{footnotesize}
  \[
  \renewcommand*{\arraystretch}{1.2}
  \begin{array}{rl@{\qquad}r@{\;}l}
    T7.& \overline\bot(z) \wedge B(x) \wedge R(x,y) \wedge A(y)  \to \bot(z) & B\sqcap\exists R.A&\sqsubseteq\bot\\
    T8.& \overline\bot(z) \wedge A(f_{R,Y}(x)) \wedge B(x) \to \bot(z) & B\sqcap\exists R_Y.A&\sqsubseteq\bot\\
    T9.& \overline\bot(x) \to\overline\bot(f_{R,Y}(x)) & \overline\bot &\sqsubseteq\exists R_Y.\overline\bot\\
    T10.& B(x) \to A(f_{R,Y}(x)) & B&\sqsubseteq\forall R_Y.A \text{~~ if $A\ne\overl\bot$ or $B\ne\overline\bot$}\\
    T11.& B(f_{R,Y}(x))\to A(x) & \exists R_Y.B&\sqsubseteq A\phantom{\overline\bot}\\
    T12.& A(x)\land B(f_{R,Y}(x))\to C(f_{R,Y}(x)) & A \sqcap \exists R_Y.B &\sqsubseteq\forall R_Y.C\phantom{\overline\bot}\\ 
    T13.& \overline\bot(z)\land A(x)\land B(f_{R,Y}(x))\land C(f_{R,Y}(x))\to\bot(z) & A\sqcap\exists R_Y(B\sqcap C)&\sqsubseteq\bot\phantom{\overline\bot}\\
    T14.& B(f_{R,Y}(x))\land C(f_{R,Y}(x))\to A(x) & \exists R_Y(B\sqcap C)&\sqsubseteq A\phantom{\overline\bot} \\
    T15. & A(z) \wedge B(f_{R',Y}(z)) \wedge \Inv{R}{z}{x} \wedge B(x) & R'_Y&\sqsubseteq S_{\set{R'_Y,R}}\text{~~and~~}R\sqsubseteq S_{\set{R'_Y,R}}\text{~~and}   \\
    & \hfill\to f_{R',Y}(z) \approx x & A&\sqsubseteq{\le} 1 S_{\set{R'_Y,R}}.B\\
    T16. & A(f_{R',Y}(x)) \wedge B(x) \wedge \Inv{R}{f_{R',Y}(x)}{y} \wedge B(y) & \tilde R'_Y&\sqsubseteq S_{\set{\tilde R'_Y,R}}\text{~~and~~}R\sqsubseteq S_{\set{\tilde R'_Y,R}}\text{~~and}   \\
    & \hfill\to x \approx y & A&\sqsubseteq{\le} 1 S_{\set{\tilde R'_Y,R}}.B\text{~~and~~}\tilde R'_Y\equiv\mathsf{inv}(R'_Y)\\
    T17. & A(z) \wedge B(f_{R,Y}(z)) \wedge B(f_{R',Z}(z)) & {R}_{Y}&\sqsubseteq S_{\set{{R}_{Y},R'_{Z}}}\text{~~and~~}R'_{Z}\sqsubseteq S_{\set{{R}_{Y},R'_{Z}}}\text{~~and} \\
    & \hfill\to f_{R,Y}(z) \approx f_{R',Z}(z) & A & \sqsubseteq{\le} 1 S_{\set{R_Y,R'_Z}}.B\\
    T18. & A(f_{R,Y}(x)) \wedge B(x) \wedge B(f_{R',Z}(f_{R,Y}(x))) & \tilde R_Y&\sqsubseteq S_{\set{\tilde R_Y,R'_Z}}\text{~~and~~}R'_Z\sqsubseteq S_{\set{\tilde R_Y,R'_Z}}\text{~~and} \\
    & \hfill\to x \approx f_{R',Z}(f_{R,Y}(x)) & A & \sqsubseteq{\le} 1 S_{\set{\tilde R_Y,R'_Z}}.B\text{~~and~~}\tilde R_Y\equiv\mathsf{inv}(R_Y)\\
    T19. & R(x,y)\to\overl\bot(x) & \exists R.\top&\sqsubseteq\overline\bot\\
    T20. & R(x,y)\to\overl\bot(y) & \top&\sqsubseteq\forall R.\overline\bot
  \end{array}
  \]
  \caption{Transformation $\Psi$ from transposed rules to DLs. 
  Role names $\tilde R$ are fresh for every $R$, and $S_{\set{R,R'}}$ 
  for every $\set{R,R'}$.}
  \label{tab:phi}
\end{footnotesize}
\end{table*}

We can now specify a transformation $\Psi$ that allows us to translate rules 
T7-T20 in Table \ref{tab:phi} back into DL axioms.

\begin{definition}
We define $\Psi$ as the transformation mapping 
\begin{inparaenum}[\it (i)]
\item each Horn rule $r$
of types T1-T2 and T4-T6 
in Table~\ref{tab:RL} to the DL axiom $\pi^{-1}(r)$
\item each rule T7-T20 on the left-hand side of Table
\ref{tab:phi} to the DL axioms on the right-hand side.\footnote{For succinctness, axioms resulting from T7, T8, T12, T13, T14, T16 and T18 are not 
given in normal form. }
\end{inparaenum} 
\end{definition}

Intuitively, $\Psi$ works as follows:
\begin{inparaenum}[\it (i)] 
\item Function-free rules are ``rolled up''
as usual into DL axioms (see e.g., T7). 
\item Unary atoms $A(f_{R,Y}(x))$ with $A \neq \overl \bot$ 
involving a functional term
are translated as either existentially or universally 
quantified concepts depending on whether they occur
in the body or in the head (e.g., T10, T11);
in contrast, atoms $\overl \bot(f_{R,Y}(x))$ in
rules $\overl\bot(x)\to\overl\bot(f_{R,Y}(x))$
are translated as $\exists R_Y.\overl \bot$, instead of
$\forall R_Y.\overl \bot$ (see T9). 
\item Rules T15-T18, which involve $\equality$ in the head 
and roles $R'$ and $R$ in the body, 
are rolled back into
axioms of type T6 over the ``union'' of $R$ and $R'$, which is captured
using fresh roles and role inclusions.
\end{inparaenum}

The ontology obtained by applying $\Psi$ to our running example
is given
in the last column of Table \ref{tab:ex-transf}.
Correctness of $\Psi$ and its implications for the computation of Horn 
rewritings are summarised in the following lemma.
\begin{restatable}{lemma}{translationback} \label{thm:back-to-DLs} Let
  $\Ont$ be a markable $\mathcal{ALCHIF}$ ontology and let $M$ be a
  marking of $\Ont$. Then the ontology 
  $\Psi(\Xi_M(\xi(\Ont)))$ is a Horn rewriting of $\Ont$.
\end{restatable}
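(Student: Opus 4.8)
The plan is to establish Lemma~\ref{thm:back-to-DLs} by chaining together the three correctness results already available and then showing that $\Psi$ is a faithful bijective re-encoding of the transposed program. Concretely, by Theorem~\ref{thm:xi-correct} the program $\xi(\Ont)$ is a rewriting of $\Ont$; by Proposition~\ref{prop:xi-roles-unmarked}(ii) the given marking $M$ may be taken to consist of unary predicates only, so $M$ is in particular a \emph{minimal} marking to which Lemma~\ref{lem:rew-rule-form} applies; and by Theorem~\ref{thm:rew-correct-markable} the transposition $\Xi_M(\xi(\Ont))$ is a Horn rewriting of $\xi(\Ont)$, hence of $\Ont$. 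It therefore suffices to prove that $\Psi(\Xi_M(\xi(\Ont)))$ and $\Xi_M(\xi(\Ont))$ are equisatisfiable when extended with any dataset $\Dat$ over the predicates of $\Ont$, since rewritability composes transitively and both programs share the original signature on such datasets.

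First I would invoke Lemma~\ref{lem:rew-rule-form} to guarantee that every rule of $\Xi_M(\xi(\Ont))$ falls into one of the rule shapes T1--T2, T4--T6 of Table~\ref{tab:RL} or T7--T20 of Table~\ref{tab:phi}, so that $\Psi$ is in fact defined on every rule and produces a Horn-$\mathcal{ALCHIF}$ ontology. The core of the argument is then to show that $\Psi$ preserves models in both directions. For the Horn rules of type T1--T2 and T4--T6 this is immediate, as $\Psi$ applies $\pi^{-1}$ and $\pi$ is satisfiability-preserving by the semantics fixed in the Preliminaries. The substance lies in rules T7--T20, where each unary atom $A(f_{R,Y}(x))$ over a functional term must be matched against a DL concept: an atom in the head becomes a universally quantified successor concept $\forall R_Y.A$ and an atom in the body an existential $\exists R_Y.A$, with the deliberate exception of T9, where $\overl\bot(f_{R,Y}(x))$ in the head is rendered as $\exists R_Y.\overl\bot$. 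I would make this correspondence precise by relating Herbrand models of the program, in which $f_{R,Y}$ denotes genuine term formation, to DL interpretations in which $R_Y$ is read as a functional role connecting a domain element to its $f_{R,Y}$-successor; the functionality of each $f_{R,Y}$ is what legitimises translating a body existential and a head universal over the \emph{same} role name consistently.

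The most delicate step, and the one I expect to be the main obstacle, is handling the equality rules T15--T18. Here two distinct functional or inverse-functional terms are forced to be equal, and $\Psi$ rolls these back into at-most-one cardinality axioms of type T6 over a \emph{fresh} role $S_{\{R,R'\}}$ that stands for the union of the two roles $R$ and $R'$ participating in the rule body, together with the inclusions $R\sqsubseteq S_{\{R,R'\}}$ and $R'\sqsubseteq S_{\{R,R'\}}$ (and, in T16 and T18, an auxiliary inverse role $\tilde R$ axiomatised by $\tilde R\equiv\mathsf{inv}(R)$). The subtlety is that a single functionality atom in the program identifies two specific successors, whereas the cardinality restriction $A\sqsubseteq{\le}1\,S_{\{R,R'\}}.B$ constrains \emph{all} $S_{\{R,R'\}}$-successors; I must argue that, over the relevant Herbrand-style models, the only $S_{\{R,R'\}}$-edges that exist are exactly those forced by the two inclusions, so that the cardinality axiom collapses precisely the intended pair and introduces no spurious identifications. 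This amounts to verifying that the fresh union role is not populated by any other axiom and that its introduction is conservative.

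Having settled the per-rule correspondence, I would assemble the directions of the equivalence. For the forward direction, given a model of $\Xi_M(\xi(\Ont))\cup\Dat$ I read off a DL interpretation by interpreting each $R_Y$ and each $S_{\{R,R'\}}$ through the corresponding functional successor relations and checking, rule by rule, that every axiom in $\Psi(\Xi_M(\xi(\Ont)))$ is satisfied. For the converse, given a model of the DL ontology together with $\Dat$, I build a (Herbrand) model of the program by instantiating the function symbols $f_{R,Y}$ via the $R_Y$-successors supplied by the existential witnesses, invoking the $\overl\bot$ rules T5, T9, and the function-symbol closure (rule~6 of Definition~\ref{def:xiprime}) to ensure $\overl\bot$ holds throughout the induced Herbrand universe, and then verifying each program rule against its source axiom. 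Closing both directions yields equisatisfiability of $\Psi(\Xi_M(\xi(\Ont)))\cup\Dat$ and $\Xi_M(\xi(\Ont))\cup\Dat$, and combining this with the chain of rewritings above gives that $\Psi(\Xi_M(\xi(\Ont)))$ is a Horn rewriting of $\Ont$, as required.
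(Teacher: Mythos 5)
Your proposal is correct and takes essentially the same route as the paper: after reducing via Theorems~\ref{thm:xi-correct} and~\ref{thm:rew-correct-markable} (with Lemma~\ref{lem:rew-rule-form} fixing the rule shapes) to showing that $\Psi(\P)$ is a rewriting of $\P=\Xi_M(\xi(\Ont))$, the paper likewise argues by bidirectional model constructions with per-rule verification of T7--T20 --- interpreting each $R_Y$ as the (functional) graph of $f_{R,Y}$ and each fresh $S_{\set{R,R'}}$ as exactly the union of the two component roles in one direction, and extracting values for $f_{R,Y}$ from $R_Y$-successors lying in $\overl\bot$ (nonempty by the T9 axiom together with Proposition~\ref{prop:bot-bar-axiomatisation}) in the other. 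Your ``delicate step'' on T15--T18 is resolved in the paper exactly as you anticipate, by construction of the union role, plus a three-case check you leave implicit: pairs of $R$-successors are merged by the original T6 rule, which survives transposition unchanged because $\equality$ can never be marked, and pairs of $R_Y$- (resp.\ $\tilde R_Y$-) successors coincide by functionality (resp.\ injectivity, in Herbrand models) of $f_{R,Y}$.
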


A closer look at our transformations reveals that
our rewritings do not introduce constructs such as inverse roles and
cardinality restrictions if these were not already present in the input
ontology.  In contrast, fresh role inclusions
may originate from cardinality restrictions in the input
ontology.  As a result,
our approach is language-preserving: if the input $\Ont_1$
is in a DL $\mathcal{L}_1$ between $\mathcal{ALC}$ and
$\mathcal{ALCHI}$, then its rewriting $\Ont_2$ stays in the Horn
fragment of $\mathcal{L}_1$; furthermore, if $\mathcal{L}_1$ is between
$\mathcal{ALCF}$ and $\mathcal{ALCIF}$, then $\Ont_2$ 
may contain fresh role inclusions ($\mathcal{H}$). A notable exception is when $\Ont_1$
is an $\mathcal{ELU}$ ontology, in which case axioms T2 and T3 in $\Ont_1$
may yield axioms of type T4 in $\Ont_2$.  The
following theorem follows from these observations and Lemma
\ref{thm:back-to-DLs}.

\begin{restatable}{theorem}{finalrewritability} \label{thm:final-rewritability}
  Let $\mathcal{L}$ be a DL between $\mathcal{ALC}$ and
  $\mathcal{ALCHI}$. Then every markable $\mathcal{L}$ ontology is
  polynomially rewritable into a Horn-$\mathcal{L}$ ontology. If
  $\mathcal{L}$ is between $\mathcal{ALCF}$ and $\mathcal{ALCHIF}$,
  then every markable $\mathcal{L}$ ontology is polynomially
  rewritable into Horn-$\mathcal{LH}$. Finally, every markable
  $\mathcal{ELU}$ ontology is polynomially rewritable into
  Horn-$\mathcal{ALC}$.
\end{restatable}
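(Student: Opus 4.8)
The plan is to derive Theorem~\ref{thm:final-rewritability} as a refinement of Lemma~\ref{thm:back-to-DLs}, which already guarantees that $\Psi(\Xi_M(\xi(\Ont)))$ is a Horn rewriting of any markable $\mathcal{ALCHIF}$ ontology $\Ont$. The only remaining work is a \emph{language-tracking} argument: I must show that, under the stated restrictions on $\mathcal{L}$, every DL constructor appearing in the output ontology $\Psi(\Xi_M(\xi(\Ont)))$ already belongs to $\mathcal{L}$ (or to its Horn fragment, possibly extended with $\mathcal{H}$). Since Lemma~\ref{thm:back-to-DLs} hands me correctness for free, the theorem reduces entirely to bookkeeping over which axiom types T1--T20 can arise and which DL features $\Psi$ introduces when rolling them back.

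\smallskip
\noindent\textbf{First}, I would fix a minimal marking $M$ of $\Ont$, which exists by markability, and invoke Lemma~\ref{lem:rew-rule-form} to conclude that $\Xi_M(\xi(\Ont))$ consists only of Horn rules of types T1--T2, T4--T6 and T7--T20. The strategy is then to trace, type by type, how each rule is generated from the input axioms (via $\xi$ and transposition) and what $\Psi$ produces. The key observations to establish are: (a) \emph{inverse roles are never introduced}, i.e.\ a role $R^-$ appears in the output only if some inverse role already occurs in $\Ont$ --- this is because $\xi$ encodes existentials as function symbols $f_{R,Y}$ rather than as binary role atoms, and $\Psi$ translates atoms over functional terms using the \emph{functional} roles $R_Y$, whose inverse status mirrors that of $R$; (b) \emph{cardinality restrictions appear only if} axioms of type T6 were present in $\Ont$, since $\le\!1$ concepts in the output stem exclusively from rules T15--T18, which in turn arise only from the T6 clauses produced by $\xi$; and (c) \emph{fresh role inclusions} $R'_Y \sqsubseteq S_{\{\dots\}}$ and $\tilde R \equiv \mathsf{inv}(R_Y)$ are generated precisely by $\Psi$ applied to T15--T18, hence again only in the presence of T6. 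These three facts together pin down exactly how the output language can grow.

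\smallskip
\noindent\textbf{Then} the three regimes fall out by case analysis. If $\mathcal{L}$ is between $\mathcal{ALC}$ and $\mathcal{ALCHI}$, no T6 axioms are present, so by (b) and (c) no cardinality restrictions and no fresh role inclusions appear; by (a) inverse roles appear only if $\mathcal{I}\in\mathcal{L}$, and role inclusions only if $\mathcal{H}\in\mathcal{L}$; every output axiom is Horn by Lemma~\ref{thm:back-to-DLs}, so the rewriting lies in Horn-$\mathcal{L}$. If $\mathcal{L}$ is between $\mathcal{ALCF}$ and $\mathcal{ALCHIF}$, T6 axioms may occur, so by (c) the rewriting may additionally contain the fresh role inclusions produced by T15--T18, placing it in Horn-$\mathcal{LH}$ --- but still no new inverse roles beyond those in $\mathcal{L}$, by (a). For the $\mathcal{ELU}$ case I must be more careful: here inverse roles are disallowed and only T1--T3 axioms occur, so $\xi$ produces no T4--T6 clauses, but the $\forall$-style rules of type T10 generated by transposing the successor-propagation rules of $\xi$ translate under $\Psi$ into universal restrictions $\overl D \sqsubseteq \forall R_Y.\overl D$, i.e.\ axioms of type T4. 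Since $\mathcal{ELU}$ extended with T4 (and no inverses, no T5/T6) is exactly Horn-$\mathcal{ALC}$, the output lands in Horn-$\mathcal{ALC}$, exactly as the running example in Table~\ref{tab:ex-transf} illustrates.

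\smallskip
\noindent\textbf{The main obstacle} I anticipate is verifying claim (a) rigorously --- that $\Psi$ never silently introduces inverse roles --- because the rules T15--T18 involve both a ``direct'' functional role $R_Y$ and a role $R$ drawn from the role hierarchy via $\sqsubseteq^*$, and $\Psi$ introduces auxiliary roles $\tilde R_Y \equiv \mathsf{inv}(R_Y)$ in T16 and T18. I would need to check that these auxiliary inverses are only ever \emph{combined} with existing roles inside the fresh $\mathcal{H}$-inclusions and do not leak a genuine $\mathcal{I}$ feature into languages lacking it; concretely, the $\tilde R_Y \equiv \mathsf{inv}(R_Y)$ definitions are harmless role-renaming inclusions rather than uses of inverse roles on concepts, and I would argue this falls under the $\mathcal{H}$ extension already accounted for in the $\mathcal{ALCF}$--$\mathcal{ALCIF}$ regime. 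The $\mathcal{ELU}$ direction requires the dual care of confirming that \emph{no} inverse or functional role sneaks in from the T10-to-T4 translation, which I expect to be routine once the generation of T10 rules from $\xi$ applied to T2/T3 axioms is made explicit.
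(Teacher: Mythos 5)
Your overall route is exactly the paper's: the appendix proof of Theorem~\ref{thm:final-rewritability} likewise reduces everything to Lemma~\ref{thm:back-to-DLs} plus a language-tracking observation, namely that $\Psi(\Xi_M(\xi(\Ont)))$ introduces only axioms of types T1--T2, T4, T7--T14 and T19--T20 unless $\Ont$ contains functionality assertions, and contains no inverse roles unless $\xi(\Ont)$ already does; the case split over the three regimes, including the $\mathcal{ELU}$-to-Horn-$\mathcal{ALC}$ step via the T10-generated universal restrictions, matches your argument point for point.

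There is, however, one flawed step in how you propose to discharge your anticipated obstacle~(a). You suggest that the axioms $\tilde R'_Y\equiv\mathsf{inv}(R'_Y)$ produced by $\Psi$ on rules T16/T18 are ``harmless role-renaming inclusions'' that can be absorbed into the $\mathcal{H}$ extension. That cannot be made to work: an axiom defining $\tilde R'_Y$ as the inverse of $R'_Y$ is a genuine use of inverse roles, and if such an axiom could arise from a markable $\mathcal{ALCF}$ or $\mathcal{ALCHF}$ ontology, the output would lie outside Horn-$\mathcal{ALCH}$, contradicting the second claim of the theorem (which promises Horn-$\mathcal{LH}$, not Horn-$\mathcal{LHI}$). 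The correct argument is vacuity, not harmlessness: rules of type T16/T18 stem only from the $\xi$-clauses for T6 axioms that mention a function symbol $f_{\mathsf{inv}(R'),Y}\in\Phi$ with $R'\sqsubseteq^{*} R$. If $\Ont$ contains no inverse roles, then every symbol in $\Phi$ has the form $f_{S,Y}$ with $S$ named, since T3 axioms mention only roles occurring in $\Ont$; so $f_{\mathsf{inv}(R'),Y}\in\Phi$ forces $R'=S^-$ for some named $S$, and then $R'\sqsubseteq^{*} R$ with $R$ named is impossible, because $\sqsubseteq^{*}$ generated from inverse-free T5 axioms never relates an inverse role to a named one. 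Hence T16/T18, and with them all $\tilde R$-axioms, simply never occur in the inverse-free regimes --- this is precisely what the paper's observation that no axiom in $\Psi(\Xi_M(\xi(\Ont)))\setminus\xi(\Ont)$ contains inverse roles unless $\xi(\Ont)$ does encapsulates. With that substitution, your bookkeeping (which correctly rests on Lemma~\ref{lem:rew-rule-form} and the minimality of the marking) goes through and coincides with the paper's proof.
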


\section{Complexity Results}\label{sec:complexity-results}

We next establish the complexity of satisfiability checking
over markable ontologies. 

We first show that satisfiability checking over markable $\mathcal{ELU}$ ontologies
is \textsc{ExpTime}-hard.
This  implies that it is not possible to polynomially rewrite
every markable $\mathcal{ELU}$ ontology into $\mathcal{EL}$. Consequently, 
our rewriting approach is optimal for $\mathcal{ELU}$
in the sense that introducing universal restrictions (or equivalently inverse roles) in the rewriting
is unavoidable.

\begin{restatable}{lemma}{elusatexptimehard} \label{lem:elu-sat-exptime-hard}
Satisfiability checking over markable $\mathcal{ELU}$ ontologies  
is \textsc{ExpTime}-hard.
\end{restatable}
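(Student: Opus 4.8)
The plan is to reduce from a known \textsc{ExpTime}-complete problem — satisfiability of $\mathcal{ALC}$ concepts with respect to a TBox, or equivalently the word problem for a polynomially space-bounded alternating Turing machine — and to show that the encoding can be carried out entirely within a \emph{markable} $\mathcal{ELU}$ ontology. The key tension I must resolve is that $\mathcal{ELU}$ lacks both value restrictions $\forall R.C$ and inverse roles, so the usual $\mathcal{ALC}$ hardness constructions do not transfer directly; I must simulate the propagation of information along a tree-shaped model using only existential restrictions, conjunctions, and disjunctions, while keeping the resulting program markable in the sense of Definition \ref{def:marking}.

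First I would fix a concrete source problem. The cleanest route is to encode the computation tree of an alternating Turing machine running in polynomial space (\textsc{ExpTime} $=$ \textsc{APSpace}), since alternation matches the disjunctive power of $\mathcal{ELU}$: existential machine states correspond naturally to disjunctive branching over successor configurations, while universal states correspond to conjunctions of required successors. Each configuration is represented by an individual, with unary concepts encoding the tape contents (one concept per cell-symbol pair, of which there are polynomially many), the head position, and the current state. I would then write $\mathcal{ELU}$ axioms of the form $\mathsf{Config}_q \sqsubseteq \bigsqcup_i \exists R.\mathsf{Succ}_i$ to generate successor configurations for existential states, and axioms $\mathsf{Config}_q \sqsubseteq \exists R_i.\mathsf{Succ}_i$ (one per branch) for universal states, together with axioms of type T2 ($\exists R.A \sqsubseteq C$) that read off and propagate local tape consistency and transition-function constraints. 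Rejecting or malformed configurations are funneled into $\bot$ via axioms of type T1 or T2 with $\bot$ in the head. An exponential-time alternating computation thus corresponds to an infinite-but-regular or exponentially-deep tree model, and the initial configuration is satisfiable iff the machine accepts.

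The crux — and the step I expect to be the main obstacle — is verifying \textbf{markability} of the constructed ontology, i.e.\ exhibiting a marking $M$ of $\xi(\Ont)$ satisfying the three conditions of Definition \ref{def:marking}. Because $\xi$ encodes each existential restriction as a functional term rather than a binary atom (Proposition \ref{prop:xi-roles-unmarked}), I only need to mark unary predicates, which already rules out the obstruction that roles create. The delicate part is condition \emph{(i)} (at most one marked atom per rule body) combined with the closure condition \emph{(iii)}: since $\bot$ is reachable from essentially every configuration predicate, condition \emph{(iii)} may force a large portion of the signature into $M$, and I must ensure that the rules arising from T2 and from the consistency checks never place two marked atoms simultaneously in a single body. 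I would design the encoding so that disjunction is confined to the ``state'' concepts and the tape-content concepts are kept Horn, deriving them deterministically from the state via Horn rules; then a marking consisting of the state/configuration concepts and $\bot$ satisfies all three conditions. I would verify conditions \emph{(i)}--\emph{(iii)} rule-by-rule over the (constantly many schematic) rule shapes produced by the encoding.

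Finally, I would confirm the two directions of correctness: the machine accepts from its initial configuration if and only if the ontology together with the singleton dataset asserting the initial configuration is \emph{satisfiable}, using a standard tree-model argument for one direction and an unraveling of any model into an accepting computation tree for the other. Since the encoding is polynomial in the machine description and the ontology is markable, this establishes \textsc{ExpTime}-hardness of satisfiability checking over markable $\mathcal{ELU}$ ontologies, as required.
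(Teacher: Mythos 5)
Your reduction source (alternating polyspace Turing machines) matches the paper's, but the core of your plan---encoding existential alternation directly by disjunctions $\mathsf{Config}_q \sqsubseteq \bigsqcup_i \exists R.\mathsf{Succ}_i$ and then marking the ``state/configuration concepts and $\bot$''---runs into a structural obstruction that your sketch does not resolve, and in fact cannot be resolved as described. After normalisation such an axiom becomes $\mathsf{Config}_q(x) \to X_1(x) \lor X_2(x)$; condition \emph{(ii)} of Definition~\ref{def:marking} forces at least one disjunct $X_i$ into the marking $M$, and the closure condition \emph{(iii)} then pulls into $M$ \emph{every} predicate reachable from $X_i$ in the dependency graph---that is, the state, head-position and tape-cell predicates of all successor configurations, since these are derived (even if only by Horn rules) from the chosen disjunct. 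Your proposed fix, keeping the tape-content concepts ``Horn'' while deriving them from the disjunctively chosen state, is inconsistent with Definition~\ref{def:WL}: a predicate is disjunctive as soon as some path in the dependency graph ending in it involves a disjunctive rule, so everything downstream of $X_i$ is disjunctive, and closure under \emph{(iii)} is about reachability, not about the Horn-ness of the intermediate rules. Once the cell/head/state predicates are all in $M$, any transition or tape-consistency rule---which unavoidably reads at least two such atoms in one body (e.g., the current state and the scanned symbol)---violates condition \emph{(i)}. This is not a detail to be discharged by a rule-by-rule check of ``constantly many schematic shapes''; it is a reason the direct encoding of alternation via disjunction cannot be markable with polynomially many concepts.

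The paper sidesteps this by never using disjunction for the simulation itself. It adapts the \textsc{ExpTime}-hardness construction for Horn-$\mathcal{ALC}$ of \cite{KroetzschRH13}: the ontology $\Ont_{\mathcal{M},w}$ built there is already in $\mathcal{EL}$ except for axioms of the form $H \sqcap C \sqsubseteq \forall S.C$, and each such axiom is replaced by $H \sqcap C \sqcap \exists S.not\_C \sqsubseteq \bot$, $\;C \sqcap not\_C \sqsubseteq \bot$ and $\top \sqsubseteq C \sqcup not\_C$ for a fresh complement concept $not\_C$. The only disjunctive rules are the excluded-middle axioms, and the fresh complements occur in rule bodies only in rules whose head is $\bot$; hence the set of all $not\_C$ (together with $\bot$) is a marking: every body contains at most one marked atom, every head at most one unmarked atom, and closure from $not\_C$ reaches only $\bot$. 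The direction of information flow is the whole point---marked predicates never feed back into the machine simulation, they only trigger $\bot$. To salvage your proposal you would have to restructure it along exactly these lines: keep the entire ATM simulation Horn and introduce disjunction solely through complement concepts whose marked side flows into $\bot$ and nowhere else.
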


All Horn DLs from $\mathcal{ALC}$ to
$\mathcal{ALCHIF}$ are \textsc{ExpTime}-complete in combined complexity and
\textsc{PTime}-complete
in data complexity \cite{KroetzschRH13}. By Theorem \ref{thm:final-rewritability}, the same
result holds for markable ontologies in DLs from $\mathcal{ALC}$ to
$\mathcal{ALCHIF}$. Finally, Lemma \ref{lem:elu-sat-exptime-hard}
shows that these complexity results also extend to markable $\mathcal{ELU}$ ontologies.

\begin{restatable}{theorem}{complexity}
Let $\mathcal{L}$ be  in-between $\mathcal{ELU}$ and $\mathcal{ALCHIF}$. 
Satisfiability checking over markable $\mathcal{L}$-ontologies is
\textsc{ExpTime}-complete  and \textsc{PTime}-complete
w.r.t.\ data.
\end{restatable}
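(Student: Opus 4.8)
The plan is to obtain both the upper and lower bounds by combining the language-preservation result of Theorem~\ref{thm:final-rewritability}, the known complexity of Horn DLs, and the hardness Lemma~\ref{lem:elu-sat-exptime-hard}. The key observation is that satisfiability of $\Ont \cup \Dat$ is preserved by the rewriting, so reasoning over a markable $\mathcal{L}$-ontology can be reduced to reasoning over its Horn rewriting, whose complexity is already settled. First I would split the argument into a membership part and a hardness part, and within membership further split according to whether $\mathcal{L}$ lies between $\mathcal{ALC}$ and $\mathcal{ALCHIF}$ (where Theorem~\ref{thm:final-rewritability} applies directly) or is exactly $\mathcal{ELU}$ (the boundary case that needs the detour through Horn-$\mathcal{ALC}$).

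For the \textbf{upper bound}, given a markable $\mathcal{L}$-ontology $\Ont$ with $\mathcal{L}$ between $\mathcal{ALC}$ and $\mathcal{ALCHIF}$, I would first compute the Horn rewriting $\Ont' = \Psi(\Xi_M(\xi(\Ont)))$ guaranteed by Theorem~\ref{thm:final-rewritability}; this is a polynomial-size Horn-$\mathcal{LH}$ ontology, hence a Horn-$\mathcal{ALCHIF}$ ontology. Since $\Ont$ and $\Ont'$ are equisatisfiable when extended with any dataset (by the definition of rewriting and Lemma~\ref{thm:back-to-DLs}), checking satisfiability of $\Ont \cup \Dat$ reduces in polynomial time to checking satisfiability of $\Ont' \cup \Dat$. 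The cited result that all Horn DLs between $\mathcal{ALC}$ and $\mathcal{ALCHIF}$ are \textsc{ExpTime}-complete in combined and \textsc{PTime}-complete in data complexity then yields the desired upper bounds, because a polynomial preprocessing step does not affect membership in \textsc{ExpTime}, and in the data-complexity setting $\Ont$ (and hence $\Ont'$) is fixed so the polynomial rewriting is a constant-time preprocessing that leaves the \textsc{PTime} bound intact. For the remaining case $\mathcal{L} = \mathcal{ELU}$, the same argument applies using the third clause of Theorem~\ref{thm:final-rewritability}, which gives a polynomial Horn-$\mathcal{ALC}$ rewriting; Horn-$\mathcal{ALC}$ is again covered by \citeA{KroetzschRH13}.

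For the \textbf{lower bound}, \textsc{PTime}-hardness in data complexity is inherited from plain Datalog (equivalently from Horn-$\mathcal{EL}$, which sits inside every $\mathcal{L}$ in the range), so that direction is immediate. The \textsc{ExpTime}-hardness in combined complexity is exactly the content of Lemma~\ref{lem:elu-sat-exptime-hard} for the weakest logic $\mathcal{ELU}$; since every $\mathcal{L}$ between $\mathcal{ELU}$ and $\mathcal{ALCHIF}$ contains $\mathcal{ELU}$, and the hard instances produced by that lemma are markable by construction, the hardness transfers to every such $\mathcal{L}$. Combining the matching upper and lower bounds gives \textsc{ExpTime}-completeness in combined complexity and \textsc{PTime}-completeness in data complexity for the whole range.

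The main obstacle I anticipate is verifying that the \textsc{PTime} data-complexity bound genuinely survives the rewriting. The rewriting $\Psi(\Xi_M(\xi(\Ont)))$ is polynomial in the size of $\Ont$, but one must be careful that it does not depend on the dataset $\Dat$; inspecting Definitions~\ref{def:little-xi} and~\ref{def:xiprime} confirms that $\xi$ and $\Xi_M$ operate purely on the ontology and introduce function symbols and fresh predicates determined by $\Ont$ alone, so for fixed $\Ont$ the rewriting is a fixed Horn ontology and the data-complexity claim is unaffected. The only subtlety worth spelling out is the $\mathcal{ELU}$ boundary case, where the rewriting leaves the source language (producing Horn-$\mathcal{ALC}$ axioms of type T4), but this is harmless since the target still falls within the \textsc{ExpTime}/\textsc{PTime} regime of \citeA{KroetzschRH13}.
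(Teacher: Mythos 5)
Your proposal is correct and follows essentially the same route as the paper, whose proof is exactly the one-line combination of Theorem~\ref{thm:final-rewritability}, Lemma~\ref{lem:elu-sat-exptime-hard}, and the known \textsc{ExpTime}/\textsc{PTime} results for Horn-$\mathcal{ALC}$ through Horn-$\mathcal{ALCHIF}$ in \cite{KroetzschRH13}. The details you spell out---that the rewriting is dataset-independent so the \textsc{PTime} data bound survives, that \textsc{PTime}-hardness in data comes from (trivially markable) Horn fragments, and that the hard $\mathcal{ELU}$ instances remain markable inside every larger $\mathcal{L}$---are all implicit in the paper's argument and you verify them correctly.
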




\section{Related Work}

Horn logics are common target languages for knowledge compilation
\cite{Kcomp}. \citeA{selman1996knowledge} proposed an algorithm
for compiling a set of propositional clauses into a set of Horn clauses s.t.\  their
Horn consequences coincide. This approach was generalised to
FOL by \citeA{DBLP:journals/ai/Val05}, without 
termination guarantees.

\citeA{BienvenuCLW14} showed undecidability of
Datalog rewritability for $\mathcal{ALCF}$ and decidability in
\textsc{NExp\-Time} for 
$\mathcal{SHI}$.
\citeA{grauIJCAI13} and \citeA{KaminskiNCG14RR} proposed
practical techniques for computing Datalog
rewritings of $\mathcal{SHI}$ ontologies based on a two-step process.
First, $\Ont$ 
is rewritten using a resolution calculus
$\Omega$  
into a 
Disjunctive Datalog program $\Omega(\Ont)$ 
of exponential 
size \cite{hms07reasoning}.
Second, $\Omega(\Ont)$ is rewritten into a Datalog program
$\P$. 
For the second step, \citeA{KaminskiNCG14RR} propose the notion of
markability of a Disjunctive Datalog program and show that 
$\P$ can be polynomially computed from $\Omega(\Ont)$ using transposition whenever
$\Omega(\Ont)$ is markable.
In contrast to our work, \citeA{KaminskiNCG14RR}
focus on Datalog as target language for rewriting (rather than Horn DLs).
Furthermore, their Datalog rewritings may be exponential w.r.t.\ the input
ontology and cannot generally be represented in DLs.
 
\citeA{GottlobMMP12} showed tractability in data complexity
of fact entailment for the class of first-order rules
with single-atom bodies, which is sufficient to
capture most DLs in the
$\text{DL-Lite}_{\mathsf{bool}}$
family \cite{Artale09thedl-lite}.

\citeA{Lutz:2012ug} investigated (non-uniform) data complexity of 
query
answering w.r.t.\ \emph{fixed} ontologies. 
They studied the boundary
of \textsc{PTime} and \coNP-hardness and 
established a connection with constraint satisfaction 
problems.
Finally, 
\citeA{DBLP:conf/ijcai/LutzPW11} studied model-theoretic rewritability
of ontologies in a DL $\mathcal{L}_1$ into a fragment $\mathcal{L}_2$
of $\mathcal{L}_1$. These rewritings preserve models rather than just satisfiability, 
which severely restricts the class of
rewritable ontologies; in particular, only ontologies that 
are ``semantically Horn'' can be rewritten. For instance,
$\Ont = \{A \sqsubseteq B \sqcup C\}$, which is rewritable by
our approach, 
is not Horn-rewritable according to \citeA{DBLP:conf/ijcai/LutzPW11}.


\section{Proof of Concept}
 
To assess the practical implications of our results, we have evaluated
whether real-world ontologies are markable (and hence also
polynomially Horn rewritable).  We analysed 120 non-Horn
ontologies 
extracted from the Protege Ontology
Library, 
BioPortal (http://bioportal.bioontology.org/), the corpus by
\citeA{GardinerTH06}, and the EBI linked data platform
(http://www.ebi.ac.uk/rdf/platform). 
To check markability, we have implemented the 2-SAT reduction in
Section~\ref{sec:markability-transposition} and a simple 2-SAT solver.

We found that a total of 32 ontologies were markable and thus
rewritable into a Horn ontology, including some ontologies
commonly used in applications, such as ChEMBL
(see http://www.ebi.ac.uk/rdf/services/chembl/) and
BioPAX Reactome (http://www.ebi.ac.uk/rdf/services/reactome/). When
using $\pi$ as first-order logic translation, we obtained 30 markable
ontologies---a strict subset of the ontologies markable using
$\xi$. However, only 27 ontologies were rewritable to a Horn DL since
in three cases the marking contained a role.
 


\section{Conclusion and Future Work}

We have presented the first practical technique for rewriting non-Horn
ontologies into a Horn DL. Our rewritings are polynomial, and our experiments
suggest that they are applicable to widely-used ontologies. 
We anticipate several directions for future work.
First, we would like to conduct an extensive evaluation to assess whether
the use of our rewritings can significantly speed up
satisfiability checking in practice.
Second, we will investigate relaxations 
of markability that 
would allow us to capture a wider range of ontologies.

\section*{Acknowledgments}

Work supported by the Royal Society, the EPSRC projects
Score!, $\text{MaSI}^3$ and DBOnto, and the FP7 project \mbox{Optique}.


\bibliographystyle{named}
\bibliography{bib,paper}

\clearpage
\onecolumn
\appendix

\section{Proofs for Section~\ref{sec:horn-rewritability}}

\hornrewundecidable*

\begin{proof}
  We adapt the undecidability proof for datalog-rewritability of
  $\mathcal{ALCF}$ in \cite{BienvenuCLW14}. Given an instance $\Pi$ of
  the undecidable finite rectangle tiling problem,
  \citeauthor{BienvenuCLW14} give an $\mathcal{ALCF}$ ontology
  $\Ont_1$, signature $\Sigma$ and concept name $E$ such that the
  following three conditions are equivalent:
  \begin{itemize}
  \item $\Pi$ admits a tiling
  \item there is a dataset $\Dat$ over $\Sigma$ such that
    $\Ont_1\cup\Dat$ is satisfiable and $\Ont_1\cup\Dat\models E(a)$ for some $a$ in $\Dat$;
  \item there is a dataset $\Dat$ over $\Sigma$ such that
    $\Ont_1\cup\Dat$ is satisfiable and $\Ont_1\cup\Dat\models\exists x.E(x)$.
  \end{itemize}

  Let $S,S'$ be fresh role names and $P_1,P_2,P_3$ fresh concept
  names. Let $\Ont_2$ be an extension of $\Ont_1$ by the following
  axioms.
  \begin{align*}
    \top&\sqsubseteq\exists S.E &
    (\exists S'.\top)\sqcap P_i\sqcap P_j&\sqsubseteq\bot\qquad\text{for~} 1\le i<j\le 3\\
    \exists S'.\top&\sqsubseteq P_1\sqcup P_2\sqcup P_3 &
    (\exists S'.\top)\sqcap P_i\sqcap\exists S'.P_i&\sqsubseteq\bot\qquad\text{for~} 1\le i\le 3
  \end{align*}
  We next show that $\Pi$ admits a tiling if and only if $\Ont_2$ is
  not rewritable to Horn-$\mathcal{L}_2$. First, suppose $\Pi$ admits
  a tiling. Then there is a dataset $\Dat_1$ over $\Sigma$ such that
  $\Ont_1\cup\Dat_1$ is satisfiable and $\Ont_1\cup\Dat_1\models E(a)$
  for some $a$ in $\Dat_1$. Given a connected undirected graph $G$, let
  $\Dat_G=\mset{S'(d,d'),S'(d',d)}{\set{d,d'}\textup{ edge in }G}$ and
  $\Dat_2=\Dat_1\cup\Dat_G\cup\mset{S(d,c)}{d\textup{ occurs in
    }\Dat_G\cup\Dat_1,c\textup{ occurs in }\Dat_1}$. Then $\Ont_2\cup\Dat_2$ is
  consistent if and only if $G$ is 3-colourable. Therefore, since
  3-colourability is \NP{}-complete in data whereas satisfiability
  checking w.r.t. Horn-$\mathcal{ALCHIF}$ ontologies is tractable in
  data, $\Ont_2$ is not rewritable to Horn-$\mathcal{L}_2$ unless
  $\Ptime=\NP$.

  Now suppose $\Pi$ does not admit a tiling. Then $\Ont_2\cup\Dat$ is
  unsatisfiable for every $\Dat$ and hence the ontology
  $\set{\top\sqsubseteq\bot}$ is a Horn-$\ELU$ rewriting of $\Ont_2$.
\end{proof}


\section{Proofs for Section \ref{sec:markability-transposition}}

Reasoning w.r.t.\ programs can be realised by means of the
\emph{hyperresolution calculus}.  In our treatment of hyperresolution
we treat disjunctions of atoms as sets and hence we do not allow for
duplicated atoms in a disjunction.
Let $r=\bigwedge_{i=1}^n \beta_i\to\fml$ be a rule and, for each $1\le
i\le n$, let $\fmm_i$ be a disjunction of atoms $\fmm_i=\fmn_i\lor
\alpha_i$ with $\alpha_i$ a single atom. Let $\sigma$ be an MGU of
each $\beta_i,\alpha_i$. Then the disjunction of atoms
$\fml'=\fml\sigma\lor\fmn_1\lor\dots\lor\fmn_n$
is a \emph{hyperresolvent} of $r$ and $\fmm_1,\dots,\fmm_n$.
Let $\DDP$ be a program, let $\Dat$ be a dataset, and let $\fml$ be a
disjunction of atoms. A (hyperresolution) \emph{derivation} of $\fml$
from $\DDP\cup\Dat$ is a pair $\rho=(T,\lambda)$ where $T$ is a tree,
$\lambda$ a labeling function mapping each node in $T$ to a
disjunction of atoms, and the following properties hold for each $v\in
T$:
\begin{enumerate}
\item $\lambda(v) = \fml$ if $v$ is the root;
\item $\lambda(v) \in\DDP\cup\Dat$ if $v$ is a leaf; and
\item if\/ $v$ has children $w_1, \ldots, w_n$, then $\lambda(v)$ is a
  hyperresolvent of a rule $r \in \DDP$ and $\lambda(w_1), \ldots,
  \lambda(w_n)$.
\end{enumerate}
We write $\DDP\cup\Dat\vdash\fml$ to denote that $\fml$ has a
derivation from $\DDP\cup\Dat$. Hyperresolution is sound and complete
in the following sense: $\DDP\cup\Dat$ is unsatisfiable iff
$\DDP\cup\Dat\vdash\square$. Furthermore, if $\DDP\cup\Dat$ is
satisfiable then $\DDP\cup\Dat\vdash \alpha$ iff
$\P\cup\Dat\models\alpha$ for every atom
$\alpha$. 




  


Hyperresolution derivations satisfy the following property.
  
\begin{proposition}\label{prop:hyperresolution}
  Let $\DDP$ be a program, $\Dat$ a dataset, and $\rho$ a derivation
  from $\DDP\cup\Dat$. Then every node in $\rho$ is labeled by either
  a single Horn atom or a (possibly empty) disjunction of disjunctive
  atoms.
\end{proposition}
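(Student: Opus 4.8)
The plan is to prove the statement as an invariant maintained throughout the derivation, by induction on the structure of the tree $T$ underlying $\rho$, working from the leaves towards the root. The invariant, matching the claim, is that the label of every node is either a single atom whose predicate is Horn, or a (possibly empty) disjunction of atoms all of whose predicates are disjunctive.

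Two structural properties of the dependency graph $G_\DDP$ (Def.~\ref{def:WL}) drive the argument, and I would establish them first. \emph{(L1)} Every predicate occurring in the head of a disjunctive rule $r$ is disjunctive: for such a predicate $Q$ and any body predicate $P$ of $r$, the edge $(P,Q)$ is labelled by $r$, so the single-edge path ending in $Q$ witnesses that $Q$ depends on the disjunctive rule $r$. \emph{(L2)} If the head predicate $Q$ of a Horn rule $r$ is Horn, then every body predicate of $r$ is Horn: otherwise some body predicate $P$ would be disjunctive and hence depend on a disjunctive rule $r'$ via a path ending in $P$; appending the $r$-labelled edge $(P,Q)$ would then make $Q$ depend on $r'$, contradicting that $Q$ is Horn.

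For the base case, each leaf is labelled by a fact from $\Dat$, i.e.\ a single atom; such a label trivially satisfies the invariant, since its unique predicate is either Horn (first case) or disjunctive (a singleton disjunction of a disjunctive atom). For the inductive step, consider an internal node $v$ whose label is a hyperresolvent of a rule $r=\bigwedge_{i=1}^n\beta_i\to\fml$ and children labels $\fmm_i=\fmn_i\lor\alpha_i$, where $\alpha_i$ is resolved against $\beta_i$ and the resolvent is $\fml\sigma\lor\fmn_1\lor\cdots\lor\fmn_n$. By the induction hypothesis each $\fmm_i$ satisfies the invariant, so each leftover $\fmn_i$ is a (possibly empty) disjunction of disjunctive atoms; moreover, whenever the resolved atom $\alpha_i$ has a Horn predicate, $\fmm_i$ must be exactly that single atom and $\fmn_i$ is empty. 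I would then split on the shape of $r$. If $r$ is disjunctive, then by \emph{(L1)} the instantiated head $\fml\sigma$ is a disjunction of disjunctive atoms, so the whole resolvent is one. If $r$ is Horn with a disjunctive head predicate, then $\fml\sigma$ is a single disjunctive atom and again the resolvent is a disjunction of disjunctive atoms. If $r$ is Horn with a Horn head predicate, then \emph{(L2)} forces every $\beta_i$, hence every resolved $\alpha_i$, to be Horn, so every $\fmn_i$ is empty and the resolvent is exactly the single Horn atom $\fml\sigma$. The degenerate empty-headed rule $\bot(x)\to{}$ yields either $\square$ or a disjunction of disjunctive atoms, both admissible.

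The main obstacle I anticipate is the bookkeeping in the Horn-head case: the invariant survives only because a Horn-headed Horn rule cannot import any disjunctive atom through its electrons, which is precisely what \emph{(L2)} guarantees. Getting \emph{(L1)} and \emph{(L2)} exactly right from the path-based definition of predicate dependence, and checking that the empty-disjunction and bodyless fact-rule corner cases are covered, is where care is needed; the rest is a routine case analysis on the applied rule.
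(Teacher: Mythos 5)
Your proof is correct and takes essentially the same approach as the paper: the paper disposes of this proposition with exactly the ``straightforward induction on $\rho$'' that you carry out in detail, and your lemmas \emph{(L1)} and \emph{(L2)} are precisely the dependency-graph facts that make that induction go through. One small nit: by condition 2 of the definition of derivations, leaves may be labelled by elements of $\P$ as well as facts from $\Dat$ (in practice the bodyless rules $\to\top(a)$), but these are single atoms and are covered by the same trivial argument as your base case.
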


\begin{proof}
  The claim follows by a straightforward induction on $\rho$.
\end{proof}

We call a node in a derivation Horn (resp.\ disjunctive) if it is
labeled by a Horn atom (resp.\ a disjunction of disjunctive atoms).



\begin{proposition} \label{prop:bot-bar-axiomatisation}
  Let $\DDP$ be a program, $M$ a marking of\/ $\DDP$, and\/ $\Dat$ a
  dataset over the predicates in $\DDP$. Then\/
  $\Xi_M(\DDP)\cup\Dat\models\overl\bot(s)$ for every ground term\/
  $s$ over the signature of\/ $\DDP\cup\Dat$.
\end{proposition}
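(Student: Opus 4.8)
To prove the entailment it suffices to fix an arbitrary model $I\models\Xi_M(\DDP)\cup\Dat$ and show $s^I\in\overl\bot^I$ for every ground term $s$ over the signature of $\DDP\cup\Dat$; since $I$ is arbitrary, this yields $\Xi_M(\DDP)\cup\Dat\models\overl\bot(s)$. The plan is to argue by structural induction on $s$, driven by the two rule families that Definition~\ref{def:xiprime} places in $\Xi_M(\DDP)$ unconditionally: the type~5 rules $P(\vec x)\to\overl\bot(x_i)$, one for each predicate $P$ of $\DDP$ and each argument position $i$, which handle the base case, and the type~6 rules $\overl\bot(x_1)\land\dots\land\overl\bot(x_n)\to\overl\bot(f(x_1,\dots,x_n))$, one for each function symbol $f$ of $\DDP$, which handle the inductive step. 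Because $\Dat$ is a set of facts and hence function-free, every function symbol occurring in $s$ already occurs in $\DDP$, so a type~6 rule is available for each function symbol that may occur in $s$.

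For the base case I take $s$ to be a constant $a$. The constants of the signature of $\DDP\cup\Dat$ stem from $\Dat$, so $\Dat$ contains a fact $P(\dots,a,\dots)$ with $a$ in some position $i$ and $P$ a predicate of $\DDP$. Since $I\models\Dat$, the corresponding tuple lies in $P^I$, and the type~5 rule $P(\vec x)\to\overl\bot(x_i)$ then forces $a^I\in\overl\bot^I$. If in addition $\DDP$ contributes a constant $a$ absent from $\Dat$, the same membership is obtained through the $\top$-axiomatisation: the rule $\to\top(a)$ of $\toppart{\P}$ makes $\top(a)$ hold in $I$, and the type~5 instance $\top(x)\to\overl\bot(x)$ (the case $P=\top$) again yields $a^I\in\overl\bot^I$.

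For the inductive step I take $s=f(s_1,\dots,s_n)$. The induction hypothesis, applied in the same model $I$, gives $s_i^I\in\overl\bot^I$ for each $i$; applying the type~6 rule for $f$ to these $n$ memberships yields $f(s_1^I,\dots,s_n^I)=s^I\in\overl\bot^I$, which completes the induction.

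I expect the only delicate point to lie in the base case, namely in certifying that every ground constant of the signature does occur in some atom satisfied by $I$ so that a type~5 rule can fire. For constants coming from $\Dat$ this is immediate from the facts; for any constant supplied by $\DDP$ it has to be traced through the preserved top-axiomatisation, and the step warranting attention is checking that the seeding rule---either the relevant fact or $\to\top(a)$---indeed survives transposition. The inductive step, in contrast, is purely mechanical.
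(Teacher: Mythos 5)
Your proof is correct and is precisely the elaboration the paper intends: the paper's own proof of this proposition is a single sentence appealing to ``the axiomatisation of $\overl\bot$ in $\Xi_M(\DDP)$'', which is exactly your induction --- the rules of type~5 in Definition~\ref{def:xiprime} seed the base case and the rules of type~6 drive the inductive step.

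The loose end you flag is the only point needing care, and it closes as follows. If $\top\notin M$ (the situation in all of the paper's examples), the seeding rule $\to\top(a)$ is indeed available in $\Xi_M(\DDP)$: when $\top$ is Horn the rule is carried over verbatim, and when $\top$ is disjunctive but unmarked, the type~3 transposition (applied with no disjunctive body atoms and $\top(a)$ as the unique unmarked head atom, so with $n=m=0$) reproduces $\to\top(a)$ unchanged. Under a strict reading, however, condition \emph{(iii)} of Definition~\ref{def:marking} forces $\top\in M$ for every nonempty marking, since the rules $\toppart{\P_\Sigma}$ give every predicate an edge to $\top$ in $G_\P$ (and $\top$ is then itself disjunctive); in that case $\to\top(a)$ does \emph{not} survive transposition --- it falls under type~2 and becomes $\overl\bot(x)\land\overl\top(a)\to\bot(x)$, so the justification ``the seeding rule survives'' would literally fail. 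What rescues your base case in either reading is the definition of a program: $\Xi_M(\DDP)$ is the \emph{smallest program} containing the listed rules, and by the paper's definition every program includes the rules $\toppart{\P_\Sigma}$, $\botpart{\P_\Sigma}$ and $\eqpart{\P_\Sigma}$ for its own signature $\Sigma$. Since a constant $a$ of $\DDP$ still occurs in the transposed rules, $\to\top(a)$ belongs to the $\top$-component of $\Xi_M(\DDP)$ regardless of what transposition does to the original occurrence. (The same closure is what places $\bot(x)\to\square$ in $\Xi_M(\DDP)$, without which Theorem~\ref{thm:rew-correct-markable} could not hold when $\bot\in M$.) With that observation, your induction is complete and the argument coincides with the paper's.
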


\begin{proof}
  The claim is a straightforward consequence of the axiomatisation of
  $\overl\bot$ in $\Xi_M(\DDP)$.
\end{proof}

\transpositioncorrect*

\begin{proof}
  We proceed in two steps, which together imply the theorem.  We fix
  an arbitrary markable program $\DDP$, a marking $M$ of $\DDP$, and a
  dataset $\Dat$. 
  W.l.o.g. we assume that $\Dat$ only contains predicates in $\P$.
  \begin{enumerate}
  \item We show that $\DDP\cup\Dat\models\square$ implies
    $\Xi_M(\DDP)\cup\Dat\models\square$. For this, we consider a
    derivation $\rho$ of $\square$ from $\DDP\cup\Dat$ and show that
    for every disjunctive atom $Q(\ve s)$ in the label of a node in
    $\rho$, we have $\Xi_M(\DDP)\cup\Dat\models\overl Q(\ve s)$ if
    $Q\in M$ and otherwise $\Xi_M(\DDP)\cup\Dat\models Q(\ve s)$. This
    claim, in turn, is shown by first showing a more general statement
    and then instantiating it with~$\rho$.
  \item We show that $\Xi_M(\DDP)\cup\Dat\models\square$ implies
    $\DDP\cup\Dat\models\square$. Again, we first show a general claim
    that holds for any derivation from $\Xi_M(\DDP)\cup\Dat$ and then
    instantiate the claim with a derivation of\, $\square$.
  \end{enumerate}
  In both steps we use that $\DDP$ and $\Xi_M(\DDP)$ entail the same
  facts over Horn predicates for every dataset.  We now detail the two
  steps formally.

  \medskip
  \noindent
  \textbf{Step 1. } Suppose $\DDP\cup\Dat\models\square$. We show
  $\Xi_M(\DDP)\cup\Dat\models\square$.
  We begin by showing the following claim.

  \smallskip
  \noindent
  Claim $(\diamondsuit)$.  Let $\fml=Q_1(\ve s_1)\lor\dots\lor Q_n(\ve
  s_n)$ be a non-empty
  disjunction of facts satisfying the following properties:
  \begin{inparaenum}[\it(i)]
  \item $\Xi_M(\P)\cup\Dat\models\overl Q_i(\ve
  s_i)$ for each 
  $Q_i\in M$.
  \item $\fml$ is derivable from $\P\cup\Dat$.
  \end{inparaenum}  
  Then, for each derivation $\rho$ of $\fml$ from $\P\cup\Dat$ and
  each atom $R(\ve t)$ with $R$ disjunctive in the label of a core
  node in $\rho$ we have $\Xi_M(\P)\cup\Dat\models\overl R(\ve t)$ if
  $R\in M$ and $\Xi_M(\P)\cup\Dat\models R(\ve t)$ otherwise.

  \smallskip

  We show the claim by induction on $\rho=(T,\lambda)$. W.l.o.g., the
  root $v$ of $T$ has a disjunctive predicate in its label (otherwise,
  the claim is vacuous since the core of $\rho$ contains no
  disjunctive nodes).

  For the base case, suppose $v$ has no children labeled with
  disjunctive predicates. We then distinguish two subcases:
  \begin{itemize}
  \item $\fml\in\Dat$. Then $\fml$ is a fact, i.e., $\fml=Q(\ve a)$
    for some $Q$ and $\ve a$. If $Q\in M$, the claim is immediate by
    assumption \emph{(i)}. If $Q\ne M$, the claim follows as
    $\Dat\models Q(\ve a)$.
  \item $\fml$ is obtained by a rule $\fmm\to\fml'\in\DDP$ where
    $\fmm$ is a conjunction of Horn atoms and, for some $\sigma$,
    $\fml=\fml'\sigma$ and $\DDP\cup\Dat\models\fmm\sigma$.  If
    $\set{Q_1,\dots,Q_n}\subseteq M$, the claim is immediate by
    assumption \emph{(i)}, so let us assume w.l.o.g.\ that $Q_1\notin
    M$. By the definition of a marking, we then have
    $\set{Q_2,\dots,Q_n}\subseteq M$, and hence it suffices to show
    $\Xi_M(\P)\cup\Dat\models Q_1(\ve s_1)$. This follows since
    $
    \fmm\land\bigwedge_{i=2}^n \overl Q_i(\veprime s_i)\to
    Q_1(\veprime s_1)\in\Xi_M(\P)$ (where $\veprime s_i\sigma=\ve s_i$),
    $\Xi_M(\P)\cup\Dat\models\bigwedge_{i=2}^n\overl Q_i(\ve s_i)$ by
    assumption \emph{(i)}, $\Xi_M(\P)\cup\Dat\models\fmm\sigma$ since
    $\Xi_M(\DDP) \cup \Dat$ and $\P \cup \Dat$ entail the same Horn
    atoms. 
  \end{itemize}
 
  For the inductive step, suppose $v$ has children $w_1,\dots,w_m$ in
  $T$ that are labeled with disjunctive predicates. W.l.o.g., there is
  a rule $r=\fmm\land\bigwedge_{i=1}^m R_i(\veprime
  t_i)\to\bigvee_{j=1}^k Q_j(\veprime s_j)$ in $\DDP$ (with $\fmm$ a
  conjunction of Horn atoms, $0\le k\le n$, and all $R_i$ disjunctive
  in $\P$) such that $\lambda(v)$ is obtained by a hyperresolution
  step using $r$ from $\fmm\sigma$ and
  $\lambda(w_1),\dots,\lambda(w_m)$ where $\sigma$ is a substitution
  mapping every atom $R_i(\ve t_i)$ to a disjunct in
  $\lambda(w_i)$. In particular, we have $\veprime s_j\sigma=\ve s_j$,
  $R_i(\veprime t_i\sigma)\in\lambda(w_i)$, and
  $\DDP\cup\Dat\models\fmm\sigma$.  We distinguish three
  cases:
  \begin{itemize}
  \item $\set{Q_1,\dots,Q_k}\subseteq M$ and $\set{R_1,\dots,R_m}\cap
    M=\emptyset$. Then, for every $i\in[1,m]$, every marked atom in
    $\lambda(w_i)$ also occurs in $\lambda(v)$; furthermore,
    every unmarked
    atom in $\lambda(v)$ occurs in $\lambda(w_i)$ for some
    $i\in[1,m]$. By the latter statement, it suffices to show the claim for the
    subderivations rooted at $w_1,\dots,w_m$.

    Let $i\in[1,m]$. By the fact that every marked atom in
    $\lambda(w_i)$ also occurs in $\lambda(v)$ and assumption
    \emph{(i)}, we have $\Xi_M(\P)\cup\Dat\models\overl S(\ve u)$ for
    every marked disjunct $S(\ve u)$ in $\lambda(w_i)$. Then, we can
    apply the inductive hypothesis to the subderivation rooted at
    $w_i$ and the claim follows.
  \item $\set{Q_1,\dots,Q_k}\subseteq M$, $R_1\in M$, and
    $\set{R_2,\dots,R_m}\cap M=\emptyset$ (note that $R_1\in M$
    implies $\set{R_2,\dots,R_m}\cap M=\emptyset$ since $M$ is a
    marking). Then (a) for every $i\in[1,m]$, every marked atom in
    $\lambda(w_i)$ except for possibly $R_1(\veprime t_1\sigma)$ in
    $\lambda(w_1)$ also occurs in $\lambda(v)$, and (b) every unmarked
    atom in $\lambda(v)$ occurs in $\lambda(w_i)$ for some
    $i\in[1,m]$. Also, we have (c)
    $\fml_\top\land\fmm\land\bigwedge_{i=2}^m R_i(\veprime
    t_i)\land\bigwedge_{j=1}^k\overl Q_j(\veprime s_j)\to\overl
    R_1(\veprime t_1)\in\Xi_M(\P)$.  As in the preceding case, by (b),
    it suffices to show the claim for the subderivations rooted at
    $w_1,\dots,w_m$.  For $w_2,\dots,w_n$, we proceed as follows. Let
    $i\in[2,m]$. By (a) and assumption \emph{(i)}, we have
    $\Xi_M(\P)\cup\Dat\models\overl S(\ve u)$ for every marked
    disjunct $S(\ve u)$ in $\lambda(w_i)$. Thus, we can apply the
    inductive hypothesis to the subderivation rooted at $w_i$. In
    particular, we obtain $\Xi_M(\DDP)\cup\Dat\models R_i(\veprime
    t_i\sigma)$.  In the case of $w_1$, we need to show
    $\Xi_M(\DDP)\cup\Dat\models\overl R_1(\veprime t_1\sigma)$ in
    order to apply the inductive hypothesis. This follows by (c) and
    assumption \emph{(i)} since
    $\Xi_M(\DDP)\cup\Dat\models\fmm\sigma$, $\set{Q_1(\veprime
      s_1\sigma),\dots,Q_k(\veprime s_k\sigma)}\subseteq\lambda(v)$,
    $\Xi_M(\DDP)\cup\Dat\models R_i(\veprime t_i\sigma)$ for
    $i\in[2,m]$, and
    $\Xi_M(\DDP)\cup\Dat\models\fml_\top\sigma$.
  \item $Q_1\notin M$, $\set{Q_2,\dots,Q_k}\subseteq M$, and
    $\set{R_1,\dots,R_m}\cap M=\emptyset$ (note that $Q_1\notin M$
    implies $\set{Q_2,\dots,Q_k}\subseteq M$ and
    $\set{R_1,\dots,R_m}\cap M=\emptyset$). Then (a) for every
    $i\in[1,m]$, every marked atom in $\lambda(w_i)$ also occurs in
    $\lambda(v)$, and (b) every unmarked atom in $\lambda(v)$ except
    for possibly $Q_1(\ve s_1)$ (but including
    $Q_2(s_2),\dots,Q_m(s_m)$) occurs in $\lambda(w_i)$ for some
    $i\in[1,m]$. By (b), it suffices to show the main claim for the
    subderivations rooted at $w_1,\dots,w_m$ and also that
    $\Xi_M(\P)\cup\Dat\models Q_1(\ve s_1)$.  Let $i\in[1,m]$. The
    main claim for the subderivations follows from (a) and assumption
    \emph{(i)}, which imply that $\Xi_M(\P)\cup\Dat\models\overl S(\ve
    u)$ for every marked disjunct $S(\ve u)$ in $\lambda(w_i)$; as a
    result, we can apply the inductive hypothesis to the subderivation
    rooted at $w_i$.  Finally, note that
    $
    \fmm\land\bigwedge_{i=1}^m R_i(\veprime
    t_i)\land\bigwedge_{j=2}^k\overl Q_j(\veprime s_j)\to Q_1(\veprime
    s_1)\in\Xi_M(\P)$ (since $r\in\P$).  Then,
    $\Xi_M(\P)\cup\Dat\models Q_1(\ve s_1)$ follows from
    $\Xi_M(\P)\cup\Dat\models \fmm\sigma$, the inductive hypothesis
    (which implies $\Xi_M(\P)\cup\Dat\models R_i(\veprime
    t_i\sigma)$), and the assumption \emph{(i)} (which implies
    $\Xi_M(\P)\cup\Dat\models \overl Q_j(\veprime s_j\sigma)$).
  \end{itemize}
  We next instantiate $(\diamondsuit)$ to show the claim in Step 1.
  Let $\varphi = \bot(s)$. We have assumed in Step 1 that $\DDP \cup
  \Dat \models\square$\, so $\bot(s)$ is derivable from $\DDP \cup
  \Dat$ for some $s$ (as $\square$ can only be derived by the rule
  $\bot(x)\to\square$), and hence condition \emph{(ii)} in
  $(\diamondsuit)$ holds.
  Furthermore, if $\bot\in M$, we have $\Xi_M(\P)\cup\Dat\models\overl
  \bot(s)$; 
  hence, condition \emph{(i)} in $(\diamondsuit)$ also holds.
  
  Now, let $\rho = (T,\lambda)$ be a derivation of $\bot(s)$ from
  $\DDP\cup\Dat$.  We exploit $(\diamondsuit)$ applied to $\rho$ to
  show that $\Xi_M(\DDP)\cup\Dat\models\bot(s)$. We distinguish two
  cases:
  \begin{itemize}
    \item $\bot\notin M$. Since $\bot(s)$ labels the root of $\rho$ we can
    apply $(\diamondsuit)$ to obtain 
  $\Xi_M(\P)\cup\Dat\models\bot(s)$; 
  the claim follows.
\item $\bot\in M$. Then there is a core node $v$ in $\rho$ such that:
  $\lambda(v)$ contains only marked atoms and $v$ has no successor $w$
  in $T$ such that all atoms in $\lambda(w)$ are marked.  We
  distinguish two cases.

  If $\lambda(v)\in\Dat$, then $\lambda(v)=Q(\ve b)$ for some $Q$ and
  $\ve b$. Moreover, by $(\diamondsuit)$, we have
  $\Xi_M(\P)\cup\Dat\models\overl Q(\ve b)$. The claim follows since
  $\overline\bot(z)\land Q(\ve x)\land\overl Q(\ve x)\to\bot(z)\in\Xi_M(\P)$
  and $\Xi_M(\P)\cup\Dat\models\overl\bot(s)$.

  If $\lambda(v)\notin\Dat$, then $v$ has successors $v_1,\dots,v_n$
  ($n\ge 0$) in $T$ such that $\lambda(v)$ is a hyperresolvent of
  $\lambda(v_1),\dots,\lambda(v_n)$ and a rule in $\P$ of the form
  $\bigwedge_{i=1}^n Q_i(\ve s_i)\to\bigvee_{j=1}^m R_j(\ve t_j)$,
  where the atoms $Q_i(\ve s_i)$ are resolved with
  $\lambda(v_i)$. Since, $\lambda(v)$ contains only marked atoms but
  $\lambda(v_1),\dots,\lambda(v_n)$ all contain Horn or unmarked
  atoms, all $Q_i$ must be Horn or unmarked and all $R_j$ must be
  marked. Hence, $\Xi_M(\P)$ contains a rule
  $r=\overl\bot(x)\land(\bigwedge_{i=1}^k Q_i(\ve
  s_i))\land(\bigwedge_{l=k+1}^n Q_l(\ve
  s_l))\land\bigwedge_{j=1}^m\overl R_j(\ve t_j)\to\bot(x)$ where,
  w.l.o.g., $Q_1,\dots,Q_k$ are Horn and $Q_{k+1},\dots,Q_n$ are
  disjunctive and unmarked. Let $\sigma$ be the substitution used in
  the hyperresolution step deriving $\lambda(v)$. By $(\diamondsuit)$,
  we then have $\Xi_M(\P)\cup\Dat\models Q_l(\ve s_l\sigma)$ for every
  $l\in[k+1,n]$ and $\Xi_M(\P)\cup\Dat\models\overl R_j(\ve
  t_j\sigma)$ for every $j\in[1,m]$. Moreover, we have
  $\lambda(v_i)=Q_i(\ve s_i\sigma)$ and hence
  $\Xi_M(\P)\cup\Dat\models Q_i(\ve s_i\sigma)$ for every
  $i\in[1,k]$. Finally, we have
  $\Xi_M(\P)\cup\Dat\models\overl\bot(s)$. The claim follows
  with~$r$.
  \end{itemize}

  \medskip
  \noindent
  \textbf{Step 2. } Let $\Xi_M(\DDP)\cup\Dat\models\square$.  Then
  there is a derivation $\rho$ of $\bot(s)$ for some $s$ from
  $\Xi_M(\DDP)\cup\Dat$.
  The fact that $\DDP\cup\Dat\models\bot(s)$ follows directly from
  Statement 1 in Claim $(\clubsuit)$, which we show next.
 
  \smallskip
  \noindent
  Claim $(\clubsuit)$. Let $\rho$ be a derivation from
  $\Xi_M(\DDP)\cup\Dat$, and let $v$ be the root of $\rho$. Then:
  \begin{enumerate}
  \item If $\lambda(v)=Q(\ve t)$, then
    $\DDP\cup\Dat\models Q(\ve t)$.
  \item If $\lambda(v)=\overl Q(\ve t)$, then
    $\DDP\cup\Dat\models\neg Q(\ve t)$.
  \end{enumerate}

  We show the two claims by simultaneous induction on~$\rho$.
  For the base case, suppose $v$ is the only node in $\rho$. We
  distinguish two cases:
  \begin{itemize}
  \item $\lambda(v)\in\Dat$. Then $\Dat\models\lambda(v)$ and the
    claim is immediate.
  \item $\lambda(v)=Q(\ve t)$ where $Q$ is Horn in $\DDP$ and
    $r=(\to Q(\ve t))\in\Xi_M(\DDP)$. Then $r\in\DDP$ and the claim
    follows.
  \end{itemize}
  For the inductive step, suppose $v$ has children $v_1,\dots,v_n$
  and, $\lambda(v)$ is a hyperresolvent of
  $\lambda(v_1),\dots,\lambda(v_n)$ and a rule $r\in\Xi_M(\DDP)$. We
  distinguish five cases:
  \begin{itemize}
  \item $r$ contains no disjunctive predicates, in which case
    the claim follows since $\DDP \cup \Dat$ and $\Xi_M(\DDP) \cup \Dat$ 
    entail the same facts over a Horn predicate.
  \item $r=\overl\bot(z)\land P(\ve x)\land\overl P(\ve x)\to\bot(z)$. Then
    $\lambda(v)=\bot(s)$ for some $s$. Since, by the inductive
    hypothesis, $\P\cup\Dat\models P(\ve t)\land\overl P(\ve t)$ for some
    $\ve t$, $\P\cup\Dat$ is inconsistent, and hence
    $\P\cup\Dat\models\bot(s)$.
  \item $r=\fml_\top\land\fml\land\bigwedge_{j=1}^m Q_j(\ve
    t_j)\land\bigwedge_{i=1}^n\overl P_i(\ve s_i)\to\overl Q(\ve t)$
    where $\fml$ is the conjunction of all Horn atoms in $r$ and
    $r'=\fml\land Q(\ve t)\land\bigwedge_{j=1}^m Q_j(\ve
    t_j)\to\bigvee_{i=1}^n P_i(\ve s_i)\in\DDP$. Then
    $\lambda(v)=\overl Q(\ve s)$ for some $\ve s$. For some $\sigma$,
    we have $\DDP\cup\Dat\models\fml\sigma$, $\ve t\sigma=\ve s$ and,
    for each $i,j$, $\Xi_M(\DDP)\cup\Dat\models\overl P_i(\ve
    s_i\sigma)$ and $\Xi_M(\DDP)\cup\Dat\models Q_j(\ve
    t_j\sigma)$. Then, by the inductive hypothesis,
    $\DDP\cup\Dat\models\neg P_i(\ve s_i\sigma)$ and
    $\DDP\cup\Dat\models Q_j(\ve t_j\sigma)$. With $r'$, we obtain
    $\DDP\cup\Dat\models\neg Q(\ve s)$.
  \item $r=\overl\bot(x)\land\fml\land\bigwedge_{j=1}^m Q_j(\ve
    t_j)\land\bigwedge_{i=1}^n\overl P_i(\ve s_i)\to\bot(x)$ where
    $\fml$ is the conjunction of all Horn atoms in $r$ and
    $r'=\fml\land\bigwedge_{j=1}^m Q_j(\ve t_j)\to\bigvee_{i=1}^n
    P_i(\ve s_i)\in\DDP$. Then $\lambda(v)=\bot(s)$ for some $s$. For
    some $\sigma$, we then have $\DDP\cup\Dat\models\fml\sigma$ and,
    for each $i,j$, $\Xi_M(\DDP)\cup\Dat\models\overl P_i(\ve
    s_i\sigma)$ and $\Xi_M(\DDP)\cup\Dat\models Q_j(\ve
    t_j\sigma)$. Then, by the inductive hypothesis,
    $\DDP\cup\Dat\models\neg P_i(\ve s_i\sigma)$ and
    $\DDP\cup\Dat\models Q_j(\ve t_j\sigma)$. With $r'$, we obtain
    that $\DDP\cup\Dat$ is inconsistent and hence
    $\DDP\cup\Dat\models\bot(s)$.
  \item $r=
    \fml\land\bigwedge_{j=1}^m Q_j(\ve
    t_j)\land\bigwedge_{i=1}^n\overl P_i(\ve s_i)\to {P'}(\ve s)$
    where $\fml$ is the conjunction of all Horn atoms in $r$ and
    $r'=\fml\land\bigwedge_{j=1}^m Q_j(\ve t_j)\to P'(\ve
    s)\lor\bigvee_{i=1}^n P_i(\ve s_i)$ in $\DDP$. Then
    $\lambda(v)={P'}(\ve t)$ for some $\ve t$. For
    some $\sigma$ we then have $\DDP\cup\Dat\models\fml\sigma$, $\ve
    s\sigma=\ve t$ and, for each $i,j$,
    $\Xi_M(\DDP)\cup\Dat\models\overl P_i(\ve s_i\sigma)$ and
    $\Xi_M(\DDP)\cup\Dat\models Q_j(\ve t_j\sigma)$. Then, by
    the inductive hypothesis, $\DDP\cup\Dat\models\neg P_i(\ve
    s_i\sigma)$ and $\DDP\cup\Dat\models Q_j(\ve
    t_j\sigma)$. With $r'$, we obtain
    $\DDP\cup\Dat\models P'(\ve t)$.\qedhere
  \end{itemize}
\end{proof}


\section{Proofs for Section \ref{sec:markability-DLs}}
\label{sec:proofs-xi}

\translationcorrect*

\begin{proof}
  For the direction from left to right, suppose $\modI$ is a model of
  $\Ont\cup\Dat$. We define the interpretation $\modJ$ such that
  \begin{itemize}
  \item the domain of $\modJ$ extends the domain of $\modI$ by one
    additional individual $u$;
  \item $\modJ$ coincides with $\modI$ on every concept name, role
    name and constant in
    $\Ont\cup\Dat$, and\, ${\approx^\modJ}={\approx^\modI}\cup\set{(u,u)}$;
  \item $f_{R,A}^\modJ(v)\in\mset{w\in A^\modI}{(v,w)\in R^\modI}$ if
    the set $\mset{w\in A^\modI}{(v,w)\in R^\modI}$ is nonempty and
    otherwise $f_{R,A}^\modJ(v)=u$
    (if $R=S^-$ for a role name $S$, we write $R^\modI$ for
    $(S^\modI)^{-1}$).
  \end{itemize}
  We show that $\modJ$ is a model of $\xi(\Ont)\cup\Dat$. Clearly,
  $\modJ$ satisfies $\Dat$ and every rule in $\xi(\Ont)$ of type T1-T2
  and T4-T6, so it suffices to show that $\modJ$ satisfies the rules
  introduced by $\xi$. So, let $r=\xi(\alpha)\setminus\pi(\alpha)$ for
  some $\alpha\in\Ont$. We distinguish the following cases:
  \begin{enumerate}
  \item $r=A(x)\to B(f_{R,B}(x))$ and $\alpha=A\sqsubseteq\exists
    R.B$. Let $v\in A^\modJ$. It suffices to show that
    $f_{R,B}^\modJ(v)\in B^\modJ$. Since $\modI$ satisfies $\alpha$,
    $v$ has an $R^\modI$-successor that is in $B^\modI$, and hence
    $f^\modJ_{R,B}(v)\in B^\modJ=B^\modI$.
  \item $r=A(f_{R',Y}(x))\to C(x)$, $\alpha=\exists R.A\sqsubseteq C$,
    and $R'\sqsubseteq^* R$. Let $f_{R',Y}^\modJ(v)\in A^\modJ$. It
    suffices to show $v\in C^\modJ$. By construction, we have
    $f_{R',Y}^\modJ(v)\ne u$ and hence $f_{R',Y}^\modJ(v)\in\mset{w\in
      A^\modI}{(v,w)\in {R'}^\modI}$. Since $R'\sqsubseteq^* R$, it
    follows that $f_{R',Y}^\modJ(v)\in\mset{w\in A^\modI}{(v,w)\in
      R^\modI}$, i.e., $v\in(\exists R.A)^\modI$. Since $\modI$
    satisfies $\alpha$, we conclude $v\in C^\modI=C^\modJ$.
  \item $r=A(x)\land Y(f_{\mathsf{inv}(R'),Y}(x))\to
    C(f_{\mathsf{inv}(R'),Y}(x))$, $\alpha=\exists R.A\sqsubseteq C$
    and $R'\sqsubseteq^* R$. Let $v\in A^\modJ$ and
    $f_{\mathrm{inv}(R'),Y}^\modJ(v)\in Y^\modJ$. It suffices to show
    $f_{\mathsf{inv}(R'),Y}^\modJ(v)\in C^\modJ$. By construction, we have
    $f_{\mathsf{inv}(R'),Y}^\modJ(v)\ne u$ and hence
    $f_{\mathsf{inv}(R'),Y}^\modJ(v)\in\mset{w\in
      Y^\modI}{(w,v)\in{R'}^\modI}$. Since $R'\sqsubseteq^* R$, it
    follows that $f_{\mathsf{inv}(R'),Y}^\modJ(v)\in\mset{w\in
      Y^\modI}{(w,v)\in R^\modI}$. Since $v\in A^\modJ=A^\modI$, we
    have $f_{\mathsf{inv}(R'),Y}^\modJ(v)\in(\exists R.A)^\modI$. Since
    $\modI$ satisfies $\alpha$, we conclude
    $f_{\mathsf{inv}(R'),Y}^\modJ(v)\in C^\modI=C^\modJ$.
  \item $r=A(f_{\mathsf{inv}(R'),B}(x))\to C(x)$,
    $\alpha=A\sqsubseteq\forall R.C$, and $R'\sqsubseteq^* R$. The
    claim follows similarly to Case 2.
  \item $r=A(x)\land Y(f_{R',Y}(x))\to C(f_{R',Y}(x))$,
    $\alpha=A\sqsubseteq\forall R.C$, and $R'\sqsubseteq^* R$. The
    claim follows similarly to Case 3.
  \item $r=A(z)\land B(f_{R',Y}(z))\land\Inv{R}{z}{x}\land B(x)\to
    f_{R',Y}(z)\equality x$, $\alpha= A\sqsubseteq{\le}1\, R.B$, and
    $R'\sqsubseteq^* R$. Let $v\in A^\modJ$, $f_{R',Y}^\modJ(v)\in
    B^\modJ$, $(v,w)\in R^\modJ$ and $w\in B^\modJ$. It suffices to
    show $f_{R',Y}^\modJ(v)\equality_\modJ w$. By construction, we
    have
    $f_{R',Y}^\modJ(v)\in\mset{w'}{(v,w')\in{R'}^\modI}\subseteq\mset{w'}{(v,w')\in
      R^\modI}$. The claim follows since $A^\modJ=A^\modI$,
    $B^\modJ=B^\modI$, $R^\modJ=R^\modI$ and $\modI$ satisfies~$\alpha$.
  \item $r=A(f_{\mathsf{inv}(R'),Y}(x))\wedge B(x) \wedge
    \Inv{R}{f_{\mathsf{inv}(R'),Y}(x)}{y} \wedge B(y) \to x \equality
    y$, $\alpha= A\sqsubseteq{\le}1\, R.B$, and $R'\sqsubseteq^*
    R$. Let $f_{\mathsf{inv}(R'),Y}^\modJ(v)\in A^\modJ$, $v\in
    B^\modJ$, $(f_{\mathsf{inv}(R'),Y}^\modJ(v),w)\in R^\modJ$, and
    $w\in B^\modJ$. It suffices to show $v\equality_\modJ w$. By
    construction, we have
    $f_{\inv{R'},Y}^\modJ(v)\in\mset{w'}{(v,w')\in\mathsf{inv}(R')^\modI}\subseteq\mset{w'}{(w',v)\in
      R^\modI}$. The claim follows since $A^\modJ=A^\modI$,
    $B^\modJ=B^\modI$, $R^\modJ=R^\modI$, and $\modI$ satisfies $\alpha$.
  \item $r=A(z)\land B(f_{R'_1,Y_1}(z))\land B(f_{R'_2,Y_2}(z))\to
    f_{R'_1,Y_1}(z)\equality f_{R'_2,Y_2}(z)$, $\alpha=
    A\sqsubseteq{\le}1\, R.B$, and $R'_i\sqsubseteq^* R$. The claim
    follows similarly to Case 6.
  \item $r=A(f_{\mathsf{inv}(R'_1),Y_1}(x)) \wedge B(x) \wedge
    B(f_{R'_2,Y_2}(f_{\mathsf{inv}(R'_1),Y_1}(x))) \to x \approx
    f_{R'_2,Y_2}(f_{\mathsf{inv}(R'_1),Y_1}(x))$, $\alpha=
    A\sqsubseteq{\le}1\, R.B$, and $R'_i\sqsubseteq^* R$. The claim
    follows similarly to Case 7.
  \end{enumerate}

  For the direction from right to left, suppose $\modJ$ is a minimal
  Herbrand model of $\xi(\Ont)\cup\Dat$.
  We define the interpretation $\modI$ such that
  \begin{itemize}
  \item $\modI$ coincides with $\modJ$ on its domain as well as on
    every concept name and every constant in $\Ont\cup\Dat$;
  \item
    $R^\modI=R^\modJ\cup\mset{(v,f_{R',Y}^\modJ(v))}{f_{R',Y}\in\Phi,v\in\Delta^\modJ,
      f_{R',Y}^\modJ(v)\in Y^\modJ,R'\sqsubseteq^*
      R}$\\
    \phantom{a}\hskip39pt$\cup\:\mset{(f_{\mathsf{inv}(R'),Y}^\modJ(v),v)}{f_{\mathsf{inv}(R'),Y}\in\Phi,v\in\Delta^\modJ,
      f_{\mathsf{inv}(R'),Y}^\modJ(v)\in Y^\modJ,R'\sqsubseteq^* R}$.
  \end{itemize}
  We show that $\modI$ is a model of $\Ont\cup\Dat$. Clearly, $\modI$
  satisfies $\Dat$ and every axiom in $\Ont$ of type T1, so it
  suffices to show that $\modI$ satisfies axioms of type T2-T6, which
  we do next.
  \begin{itemize}
  \item Let $\exists R.A\sqsubseteq C\in\Ont$. W.l.o.g., let
    $v\in(\exists R.A)^\modI\setminus(\exists R.A)^\modJ$ (if
    $v\in(\exists R.A)^\modJ$ the claim is immediate since
    $\pi(\exists R.A\sqsubseteq C)\in\xi(\Ont)$). It suffices to show
    $v\in C^\modI$. By construction of $R^\modI$, there exists some
    $R'\sqsubseteq^* R$ and $Y$ such that either $f_{R',Y}\in\Phi$ and
    $f_{R',Y}^\modJ(v)\in Y^\modJ\cap A^\modJ$ or
    $f_{\mathsf{inv}(R'),Y}\in\Phi$ and there is some
    $w\in\Delta^\modJ$ such that $v=f_{\mathsf{inv}(R'),Y}^\modJ(w)\in
    Y^\modJ$ and $w\in A^\modI=A^\modJ$. In the former case,
    $v\in C^\modJ=C^\modI$ follows since $A(f_{R',Y}(x))\to
    C(x)\in\xi(\Ont)$. In the latter case, $v\in C^\modI$
    follows since $A(x)\land Y(f_{\mathsf{inv}(R'),Y}(x))\to
    C(f_{\mathsf{inv}(R'),Y}(x))\in\xi(\Ont)$.
  \item Let $A\sqsubseteq\exists R.B\in\Ont$ and let $v\in
    A^\modI$. It suffices to show $v\in(\exists R.B)^\modI$. Since
    $A^\modI=A^\modJ$ and $A(x)\to B(f_{R,B}(x))\in\xi(\Ont)$, we have
    $f_{R,B}^\modJ(v)\in B^\modJ=B^\modI$. Hence it suffices to show
    $(v,f_{R,B}^\modJ(v))\in R^\modI$, which follows since
    $f_{R,B}\in\Phi$, $f_{R,B}^\modJ(v)\in B^\modJ$ and
    $R\sqsubseteq^* R$.
  \item Let $A\sqsubseteq\forall R.C\in\Ont$. The claim follows
    analogously to the case for $\exists R.A\sqsubseteq C\in\Ont$.
  \item Let $S\sqsubseteq R\in\Ont$. W.l.o.g., let $(v,w)\in
    S^\modI\setminus S^\modJ$ (if $(v,w)\in S^\modJ$ we immediately
    obtain $(v,w)\in R^\modJ$ since $\pi(S\sqsubseteq
    R)\in\xi(\Ont)$). We show $(v,w)\in R^\modI$. By construction of
    $S^\modI$, there exists some $R'\sqsubseteq^* S$ and $Y$ such that
    either $f_{R',Y}\in\Phi$ and $w=f_{R',Y}^\modJ(v)\in Y^\modJ$ or
    $f_{\mathsf{inv}(R'),Y}\in\Phi$ and
    $v=f_{\mathsf{inv}(R'),Y}^\modJ(w)\in Y^\modJ$. In both cases we
    obtain $(v,w)\in R^\modI$ since $R'\sqsubseteq^* S$ and
    $S\sqsubseteq R\in\Ont$ implies $R'\sqsubseteq^* R$.
  \item Let $A\sqsubseteq{\le}1\,R.B\in\Ont$. Let $v\in A^\modI$,
    $w,u\in B^\modI$, and $(v,w),(v,u)\in R^\modI$. We show
    $w\equality_\modI u$. We distinguish the following subcases:
    \begin{itemize}
    \item $\set{(v,u),(v,w)}\subseteq R^\modJ$. Then the claim is immediate since
      $\pi(A\sqsubseteq{\le}1\,R.B)\in\xi(\Ont)$.
    \item $(v,u)\in R^\modJ$ and $(v,w)\in R^\modI\setminus R^\modJ$.
      By construction of $R^\modI$, there exists some $R'\sqsubseteq^*
      R$ and $Y$ such that either $f_{R',Y}\in\Phi$ and
      $w=f_{R',Y}^\modJ(v)\in Y^\modJ$ or
      $f_{\mathsf{inv}(R'),Y}\in\Phi$ and
      $v=f_{\mathsf{inv}(R'),Y}^\modJ(w)\in Y^\modJ$. In the former
      case, $w\equality_\modI u$ follows since $A(z)\land
      B(f_{R',Y}(z))\land\Inv{R}{z}{x}\land B(x)\to
      f_{R',Y}(z)\equality x\in\xi(\Ont)$. In the latter case, the
      claim follows since $A(f_{\mathsf{inv}(R'),Y}(x)) \wedge B(x)
      \wedge \Inv{R}{f_{\mathsf{inv}(R'),Y}(x)}{y} \wedge B(y)\to x
      \approx y\in\xi(\Ont)$.
    \item $\set{(v,u),(v,w)}\subseteq R^\modI\setminus R^\modJ$. By
      construction of $R^\modI$, there are some
      $R'_1,R'_2\sqsubseteq^* R$ and $Y_1,Y_2$ such that we have one
      of the three following cases:
      \begin{enumerate}
      \item $\set{f_{R'_1,Y_1},f_{R'_2,Y_2}}\subseteq\Phi$,
        $u=f_{R'_1,Y_1}^\modJ(v)\in Y_1^\modJ$ and
        $w=f_{R'_2,Y_2}^\modJ(v)\in Y_2^\modJ$. Then the claim follows
        since $A(z) \wedge B(f_{R'_1,Y_1}(z)) \wedge
        B(f_{R'_2,Y_2}(z)) \to f_{R'_1,Y_1}(z) \approx
        f_{R'_2,Y_2}(z)\in\xi(\Ont)$.
      \item
        $\set{f_{\mathsf{inv}(R'_1),Y_1},f_{R'_2,Y_2}}\subseteq\Phi$,
        $v=f_{\mathsf{inv}(R'_1),Y_1}^\modJ(u)\in Y_1^\modJ$ and
        $w=f_{R'_2,Y_2}^\modJ(f_{\mathsf{inv}(R'_1),Y_1}^\modJ(u))\in
        Y_2^\modJ$. Then the claim follows since
        $A(f_{\mathsf{inv}(R'_1),Y_1}(x)) \wedge B(x) \wedge
        B(f_{R'_2,Y_2}(f_{\mathsf{inv}(R'_1),Y_1}(x))) \to x \approx
        f_{R'_2,Y_2}(f_{\mathsf{inv}(R'_1),Y_1}(x))\in\xi(\Ont)$.
      \item
        $\set{f_{\mathsf{inv}(R'_1),Y_1},f_{\mathsf{inv}(R'_2),Y_2}}\subseteq\Phi$,
        $v=f_{\mathsf{inv}(R'_1),Y_1}^\modJ(u)\in Y_1^\modJ$ and
        $v=f_{\mathsf{inv}(R'_2),Y_2}^\modJ(w)\in Y_2^\modJ$. The
        claim then follows since, as $\modJ$ is a Herbrand model, we
        must have $u=w$ and ${\equality_\modI}$ is reflexive.
        \qedhere
      \end{enumerate}
    \end{itemize}
  \end{itemize}
\end{proof}

\xiroleshorn*

\begin{proof}
  Note that all non-Horn rules in $\xi(\Ont)$ are of type T1, i.e.,
  have unary predicates in the head. Both claims follow from this
  observation and the fact that $\xi(\Ont)$ contains no rules with
  unary predicates in the body and binary predicates in the head
  except for rules of type T6. Thus, whenever a binary predicate $P$
  is disjunctive in $\xi(\Ont)$ (resp., is part of a minimal marking
  of $\xi(\Ont)$), this is due to an axiom $P(x,y)\land x\equality
  z\to P(z,y)$ or $P(x,y)\land y\equality z\to P(x,z)$ in
  $\eqpart{\xi(\Ont)}$ where $\equality$ is disjunctive (resp.,
  marked) in $\xi(\Ont)$. However, predicate $\equality$ cannot be
  part of any marking since then the transitivity rule $x\equality
  y\land y\equality z\to x\equality z$ in $\eqpart{\xi(\Ont)}$ would
  have two marked body atoms.
\end{proof}


\section{Proofs for Section \ref{sec:rewriting}}
\label{sec:proofs-rewriting}

\xiruleform*

\begin{proof}
  The claim follows by a simple case analysis over the possible rule
  types in $\xi(\Ont)$ (as given in Definition~\ref{def:little-xi}) as
  well as the possible minimal markings for each rule type. The
  analysis exploits that minimal markings involve no binary predicates
  (Proposition~\ref{prop:xi-roles-unmarked}\,\textit{(ii)}).
\end{proof}

\translationback*

\begin{proof}
  By Theorems~\ref{thm:rew-correct-markable} and~\ref{thm:xi-correct},
  it suffices to show that $\Psi(\P)$ is a rewriting of $\P$ whenever
  $\P=\Xi_M(\xi(\Ont))$ for some
  $\Ont$ and $M$. 
  So let $\P$ be as required and let $\Dat$ be a dataset over the
  predicates in $\P$. We show that $\P\cup\Dat$ is satisfiable if
  and only if so is $\pi(\Psi(\P))\cup\Dat$.

  For the direction from left to right, let $\modI$ be a minimal
  Herbrand model of $\P$. We define the interpretation $\modJ$ such
  that
  \begin{itemize}
  \item $\modJ$ coincides with $\modI$ on its domain as well as on
    every concept name, role name, and individual constant in
    $\P\cup\Dat$;
  \item $R_Y^\modJ=\mset{(v,f_{R,Y}^\modI(v))}{v\in\Delta^\modI}$ for
    each function $f_{R,Y}$ in $\P$;
  \item $\tilde R_Y^\modJ=(R_Y^\modJ)^{-1}$ for each role $\tilde R_Y$
    in $\Psi(\P)$;
  \item $S_{\set{R_1,R_2}}^\modJ=R_1^\modJ\cup R_2^\modJ$ for each role $S_{R_1,R_2}$
    in $\Psi(\P)$.
  \end{itemize}
  We next show that $\modJ$ is a model of $\pi(\Psi(\P))\cup\Dat$. By
  construction, $\modJ$ satisfies axioms of type T1--T2 and T4--T6, so
  it suffices to show that $\modJ$ satisfies axioms of type T7--T20:
  \begin{description}
  \item[T7] Let $\overl\bot(z)\land B(x)\land R(x,y)\land
    A(y)\to\bot(z)\in\P$ and $B\sqcap\exists
    R.A\sqsubseteq\bot\in\Psi(\P)$. Let $v\in
    B^\modJ\cap(\exists R.A)^\modJ=B^\modI\cap(\exists R.A)^\modI$. By
    Proposition~\ref{prop:bot-bar-axiomatisation}, we also have
    $v\in\overl\bot^\modI$, and hence
    $v\in\bot^\modI=\bot^\modJ$.
  \item[T8] Let $\overl\bot(z)\land A(f_{R,Y}(x))\land
    B(x)\to\bot(z)\in\P$ and $B\sqcap\exists
    R_Y.A\sqsubseteq\bot\in\Psi(\P)$. Let $v\in B^\modJ\cap(\exists
    R_Y.A)^\modJ$. Then $v\in B^\modI$ and $f_{R,Y}^\modI(v)\in
    A^\modI$. Moreover, by
    Proposition~\ref{prop:bot-bar-axiomatisation}, we have
    $v\in\overl\bot^\modI$, and hence $v\in\bot^\modI=\bot^\modJ$.
  \item[T9] Let $\overl\bot(x)\to\overl\bot(f_{R,Y}(x))\in\P$ and
    $\overl\bot\sqsubseteq\exists R_Y.\overl\bot\in\Psi(\P)$. Let
    $v\in\overl\bot^\modJ=\overl\bot^\modI$. Then
    $f_{R,Y}^\modI(v)\in\overl\bot^\modI$, and hence $v\in(\exists R_Y.\overl\bot)^\modJ$.
  \item[T10] Let $B(x)\to A(f_{R,Y}(x))\in\P$ and $B\sqsubseteq\forall
    R_Y.A\in\Psi(\P)$. Let $v\in B^\modJ=B^\modI$. Then
    $f_{R,Y}^\modI(v)\in A^\modI$, and hence $v\in(\exists
    R_Y.A)^\modJ$. Moreover, since $R_Y^\modJ$ is functional by
    definition, we have $v\in(\forall R_Y.A)^\modJ$.
  \item[T11] Let $B(f_{R,Y}(x))\to A(x)\in\P$ and $\exists
    R_Y.B\sqsubseteq A\in\Psi(\P)$. Let $v\in(\exists
    R_Y.B)^\modJ$. Then $f_{R,Y}^\modI(v)\in B^\modI$, and hence $v\in
    A^\modI=A^\modJ$.
  \item[T12] Let $A(x)\land B(f_{R,Y}(x))\to C(f_{R,Y}(x))\in\P$ and
    $A\sqcap\exists R_Y.B\sqsubseteq\forall R_Y.C\in\Psi(\P)$. Suppose
    $v\in A^\modJ\cap(\exists R_Y.B)^\modJ$. Then $v\in A^\modI$ and
    $f_{R,Y}^\modI(v)\in B^\modI$. Consequently, $f_{R,Y}^\modI(v)\in
    C^\modI=C^\modJ$, and hence $v\in(\exists R_Y.C)^\modJ$. Moreover,
    since $R_Y^\modJ$ is functional by definition, we have
    $v\in(\forall R_Y.C)^\modJ$, as required.
  \item[T13] Let $\overl\bot(z)\land A(x)\land B(f_{R,Y}(x))\land
    C(f_{R,Y}(x))\to\bot(z)\in\P$ and $A\sqcap\exists R_Y(B\sqcap
    C)\sqsubseteq\bot\in\Psi(\P)$. The claim follows similarly to Case
    T8.
  \item[T14] Let $B(f_{R,Y}(x))\land C(f_{R,Y}(x))\to A(x)\in\P$ and
    $\exists R_Y(B\sqcap C)\sqsubseteq A\in\Psi(\P)$. The claim
    follows similarly to Case T11.
  \item[T15] Let $A(z) \wedge B(f_{R',Y}(z)) \wedge \Inv{R}{z}{x}
    \wedge B(x)\to f_{R',Y}(z) \equality x\in\P$ and
    $\set{R'_Y\sqsubseteq S_{\set{R'_Y,R}},R\sqsubseteq
      S_{\set{R'_Y,R}},A\sqsubseteq{\le} 1
      S_{\set{R'_Y,R}}.B}\subseteq\Psi(\P)$ where $R$ occurs in
    $\Ont$. It suffices to show that $\modJ$ satisfies
    $A\sqsubseteq{\le} 1 S_{\set{R'_Y,R}}.B$. Let $v\in
    A^\modJ=A^\modI$, $\set{u,w}\subseteq B^\modJ=B^\modI$, and
    $\set{(v,u),(v,w)}\subseteq S_{\set{R'_Y,R}}^\modJ$. We show
    $u\equality_\modJ w$. We distinguish three cases:
    \begin{itemize}
    \item $\set{(v,u),(v,w)}\subseteq R^\modJ=R^\modI$. Then the claim
      follows since $A(z)\land\Inv{R}{z}{x_1}\land\Inv{R}{z}{x_2}\land
      B(x_1)\land B(x_2)\to x_1\equality x_2\in\P$ and
      ${\equality_\modJ}={\equality_\modI}$.
    \item $(v,u)\in {R'}_Y^\modJ$ and $(v,w)\in R^\modJ=R^\modI$. By
      construction, $u=f_{R',Y}^\modI(v)$, and the claim follows since
      $A(z) \wedge B(f_{R',Y}(z)) \wedge \Inv{R}{z}{x} \wedge B(x)\to
      f_{R',Y}(z) \equality x\in\P$ and
      ${\equality_\modJ}={\equality_\modI}$.
    \item $\set{(v,u),(v,w)}\subseteq {R'}_Y^\modJ$. Since
      ${R'}_Y^\modJ$ is functional by definition, we have $u=w$ and
      hence $u\equality_\modJ w$ by reflexivity of~${\equality_\modJ}$.
    \end{itemize}
  \item[T16] Let $A(f_{R',Y}(x)) \wedge B(x) \wedge
    \Inv{R}{f_{R',Y}(x)}{y}\to x \equality y\in\P$ and $\set{\tilde
      R'_Y\sqsubseteq S_{\set{\tilde R'_Y,R}},R\sqsubseteq
      S_{\set{\tilde R'_Y,R}},A\sqsubseteq{\le} 1 S_{\set{\tilde
          R'_Y,R}}.B,\tilde
      R'_Y\equiv\mathsf{inv}(R'_Y)}\subseteq\Psi(\P)$ where $R$ occurs
    in $\Ont$. It suffices to show that $\modJ$ satisfies
    $A\sqsubseteq{\le} 1 S_{\set{\tilde R'_Y,R}}.B$. Let $v\in
    A^\modJ=A^\modI$, $\set{u,w}\subseteq B^\modJ=B^\modI$, and
    $\set{(v,u),(v,w)}\subseteq S_{\set{\tilde R'_Y,R}}^\modJ$. We show
    $u\equality_\modJ w$. We distinguish three cases:
    \begin{itemize}
    \item $\set{(v,u),(v,w)}\subseteq R^\modJ=R^\modI$. Then the claim
      follows since $A(z)\land\Inv{R}{z}{x_1}\land\Inv{R}{z}{x_2}\land
      B(x_1)\land B(x_2)\to x_1\equality x_2\in\P$ and
      ${\equality_\modJ}={\equality_\modI}$.
    \item $(v,u)\in {\tilde R}^{\prime\modJ}_Y$ and $(v,w)\in
      R^\modJ=R^\modI$. By construction, $v=f_{R',Y}^\modI(u)$, and
      the claim follows since $A(f_{R',Y}(x)) \wedge B(x) \wedge
      \Inv{R}{f_{R',Y}(x)}{y}\to x \equality y\in\P$ and
      ${\equality_\modJ}={\equality_\modI}$.
    \item $\set{(v,u),(v,w)}\subseteq {\tilde R}^{\prime\modJ}_Y$. Since
      $\modI$ is a Herbrand model, $f_{R',Y}^\modI$ is injective and
      hence ${\tilde R}^{\prime\modJ}_Y$ is functional. Therefore, we have
      $u=w$ and hence $u\equality_\modJ w$ by reflexivity
      of~${\equality_\modJ}$.
    \end{itemize}
  \item[T17] Let $A(z) \wedge B(f_{R,Y}(z)) \wedge B(f_{R',Z}(z))\to
    f_{R,Y}(z) \equality f_{R',Z}(z)\in\P$ and $\set{R_Y\sqsubseteq
      S_{\set{R_Y,R'_Z}},R'_Z\sqsubseteq
      S_{\set{R_Y,R'_Z}},A\sqsubseteq{\le} 1
      S_{\set{R_Y,R'_Z}}.B}\subseteq\Psi(P)$. It suffices to show
    that $\modJ$ satisfies $A\sqsubseteq{\le} 1
    S_{\set{R_Y,R'_Z}}.B$. Let $v\in A^\modJ=A^\modI$,
    $\set{u,w}\subseteq B^\modJ=B^\modI$, and
    $\set{(v,u),(v,w)}\subseteq S_{\set{R_Y,R'_Z}}^\modJ$. We
    show $u\equality_\modJ w$. W.l.o.g., we distinguish two cases:
    \begin{itemize}
    \item $\set{(v,u),(v,w)}\subseteq R_Y^\modJ$. Since $R_Y^\modJ$ is
      functional by definition, we have $u=w$ and hence
      $u\equality_\modJ w$ by reflexivity of~${\equality_\modJ}$.
    \item $(v,u)\in {R}^\modJ_Y$ and $(v,w)\in {R'}_Z^\modJ$. Then, by
      construction, $u=f_{R,Y}(v)$ and $w=f_{R',Z}(v)$. The claim
      follows since $A(z) \wedge B(f_{R,Y}(z)) \wedge
      B(f_{R',Z}(z))\to f_{R,Y}(z) \equality f_{R',Z}(z)\in\P$ and
      ${\equality_\modJ}={\equality_\modI}$.
    \end{itemize}
  \item[T18] Let $A(f_{R,Y}(x)) \wedge B(x) \wedge
    B(f_{R',Z}(f_{R,Y}(x)))\to x \approx f_{R',Z}(f_{R,Y}(x))\in\P$
    and $\set{\tilde R_Y\sqsubseteq S_{\set{\tilde
          R_Y,R'_Z}},R'_Z\sqsubseteq S_{\set{\tilde
          R_Y,R'_Z}},A\sqsubseteq{\le} 1 S_{\set{\tilde
          R_Y,R'_Z}}.B,\tilde
      R_Y\equiv\mathsf{inv}(R_Y)}\subseteq\Psi(P)$. It suffices to
    show that $\modJ$ satisfies $A\sqsubseteq{\le} 1 S_{\set{\tilde
        R_Y,R'_Z}}.B$. Let $v\in A^\modJ=A^\modI$, $\set{u,w}\subseteq
    B^\modJ=B^\modI$, and $\set{(v,u),(v,w)}\subseteq S_{\set{\tilde
        R_Y,R'_Z}}^\modJ$. We show $u\equality_\modJ w$. We
    distinguish three cases:
    \begin{itemize}
    \item $\set{(v,u),(v,w)}\subseteq\tilde R_Y^\modJ$. Since $\modI$
      is a Herbrand model, $f_{R,Y}^\modI$ is injective and hence
      $\tilde R_Y^\modJ$ is functional. Therefore, we have $u=w$ and
      hence $u\equality_\modJ w$ by reflexivity
      of~${\equality_\modJ}$.
    \item $(v,u)\in {\tilde R}^\modJ_Y$ and $(v,w)\in
      {R'}_Z^\modJ$. Then, by construction, $v=f_{R,Y}(u)$ and
      $w=f_{R',Z}(f_{R,Y}(u))$. The claim follows since $A(f_{R,Y}(x))
      \wedge B(x) \wedge B(f_{R',Z}(f_{R,Y}(x)))\to x \approx
      f_{R',Z}(f_{R,Y}(x))\in\P$ and
      ${\equality_\modJ}={\equality_\modI}$.
    \item $\set{(v,u),(v,w)}\subseteq{R'}_Z^\modJ$. Since
      ${R'}_Z^\modJ$ is functional by definition, we have $u=w$ and hence
      $u\equality_\modJ w$ by reflexivity of~${\equality_\modJ}$.
    \end{itemize}
  \item[T19] Let $R(x,y)\to\overl\bot(x)\in\P$ and $\exists
    R.\top\sqsubseteq\overl\bot\in\Psi(\P)$. Let $v\in(\exists
    R.\top)^\modJ$. Then, for some $w$, $(v,w)\in
    R^\modJ=R^\modI$. Consequently,
    $v\in\overl\bot^\modI=\overl\bot^\modJ$.
  \item[T20] Let $R(x,y)\to\overl\bot(y)\in\P$ and
    $\top\sqsubseteq\forall R.\overl\bot\in\Psi(\P)$. Let $(v,w)\in
    R^\modJ=R^\modI$. Then $w\in\overl\bot^\modI=\overl\bot^\modJ$.
  \end{description}

  For the direction from right to left, let $\modJ$ be a minimal
  Herbrand model of $\pi(\Psi(\P))\cup\Dat$. We define the
  interpretation $\modI$ such that
  \begin{itemize}
  \item $\modI$ coincides with $\modJ$ on its domain as well as on every concept name, role
    name, and individual constant in $\P\cup\Dat$;
  \item $f_{R,Y}^\modI(v)\in\mset{w\in\overl\bot^\modJ}{(v,w)\in
      R_Y^\modJ}$ for each function $f_{R,Y}$ in $\P$ (note that by
    Proposition~\ref{prop:bot-bar-axiomatisation} and the fact that
    $\overl\bot\sqsubseteq\exists R_Y.\overl\bot\in\Psi(\P)$, the set
    $\mset{w\in\overl\bot^\modJ}{(v,w)\in R_Y^\modJ}$ is nonempty for
    every $v\in\Delta^\modJ$).
  \end{itemize}
  We show that $\modI$ is a model of $\P\cup\Dat$. By construction,
  $\modI$ satisfies rules of type T1--T2 and T4--T6, so
  it suffices to show that $\modJ$ satisfies rules of type T7--T20:
  \begin{description}
  \item[T7] Let $\overl\bot(z)\land B(x)\land R(x,y)\land
    A(y)\to\bot(z)\in\P$ and $B\sqcap\exists
    R.A\sqsubseteq\bot\in\Psi(\P)$. Let $w\in\overl\bot^\modI$, $v\in
    B^\modI$, $v'\in A^\modI$, and $(v,v')\in R^\modI$. Then
    $v\in(B\sqcap\exists R.A)^\modJ$ and hence $v\in\bot^J$. Since
    $\bot(x)\to\square\in\pi(\Psi(\P))$, this means that $\modJ$ is
    not a model of $\pi(\Psi(\P))\cup\Dat$, so the claim holds
    vacuously.
  \item[T8] Let $\overl\bot(z)\land A(f_{R,Y}(x))\land
    B(x)\to\bot(z)\in\P$ and $B\sqcap\exists
    R_Y.A\sqsubseteq\bot\in\Psi(\P)$. Let $w\in\overl\bot^\modI$,
    $v\in B^\modI$, and $f_{R,Y}^\modI(v)\in A^\modI$. Then
    $(v,f_{R,Y}^\modI(v))\in R_Y^\modJ$. Thus, $v\in(B\sqcap\exists
    R_Y.A)^\modJ$, and so $v\in\bot^\modJ$. Since
    $\bot(x)\to\square\in\pi(\Psi(\P))$, this means that $\modJ$ is
    not a model of $\pi(\Psi(\P))\cup\Dat$, so the claim holds
    vacuously.
  \item[T9] Let $\overl\bot(x)\to\overl\bot(f_{R,Y}(x))\in\P$ and
    $\overl\bot\sqsubseteq\exists R_Y.\overl\bot\in\Psi(\P)$. Let
    $v\in\overl\bot^\modI=\overl\bot^\modJ$. Then $v\in(\exists
    R_Y.\overl\bot)^\modJ$, hence the set
    $\mset{w\in\overl\bot^\modJ}{(v,w)\in R_Y^\modJ}$ is nonempty and
    $f_{R,Y}^\modI(v)\in\overl\bot^\modJ=\overl\bot^\modI$.
  \item[T10] Let $B(x)\to A(f_{R,Y}(x))\in\P$ and $B\sqsubseteq\forall
    R_Y.A\in\Psi(\P)$. Let $v\in B^\modI=B^\modJ$. Then $v\in(\forall
    R_Y.A)^\modJ$. Since $f_{R,Y}^\modI(v)\in\mset{w}{(v,w)\in
      R_Y^\modJ}$, it follows that $f_{R,Y}^\modI(v)\in A^\modJ=A^\modI$.
  \item[T11] Let $B(f_{R,Y}(x))\to A(x)\in\P$ and $\exists
    R_Y.B\sqsubseteq A\in\Psi(\P)$. Let $f_{R,Y}^\modI(v)\in
    B^\modI=B^\modJ$. Since $(v,f_{R,Y}^\modI(v))\in R_Y^\modJ$, we
    have $v\in(\exists R_Y.B)^\modJ$, and hence $v\in
    A^\modJ=A^\modI$.
  \item[T12] Let $A(x)\land B(f_{R,Y}(x))\to C(f_{R,Y}(x))\in\P$ and
    $A\sqcap\exists R_Y.B\sqsubseteq\forall R_Y.C\in\Psi(\P)$. Let
    $v\in A^\modI=A^\modJ$ and $f_{R,Y}^\modI(v)\in
    B^\modI=B^\modJ$. Then $(v,f_{R,Y}^\modI(v))\in R_Y^\modJ$, and
    consequently $v\in(\exists R_Y.B)^\modJ$. Since this implies
    $v\in(\forall R_Y.C)^\modJ$ and $(v,f_{R,Y}^\modI(v))\in
    R_Y^\modJ$, we then obtain $f_{R,Y}^\modI(v)\in C^\modJ=C^\modI$, as required.
  \item[T13] Let $\overl\bot(z)\land A(x)\land B(f_{R,Y}(x))\land
    C(f_{R,Y}(x))\to\bot(z)\in\P$ and $A\sqcap\exists R_Y(B\sqcap
    C)\sqsubseteq\bot\in\Psi(\P)$. The claim follows similarly to Case
    T8.
  \item[T14] Let $B(f_{R,Y}(x))\land C(f_{R,Y}(x))\to A(x)\in\P$ and
    $\exists R_Y(B\sqcap C)\sqsubseteq A\in\Psi(\P)$. The claim
    follows similarly to Case T11.
  \item[T15] Let $A(z) \wedge B(f_{R',Y}(z)) \wedge \Inv{R}{z}{x}
    \wedge B(x)\to f_{R',Y}(z) \equality x\in\P$ and
    $\set{R'_Y\sqsubseteq S_{\set{R'_Y,R}},R\sqsubseteq
      S_{\set{R'_Y,R}},A\sqsubseteq{\le} 1
      S_{\set{R'_Y,R}}.B}\subseteq\Psi(\P)$. Let $v\in A^\modI=A^\modJ$, $f_{R',Y}^\modI(v)\in
    B^\modI=B^\modJ$, $(v,w)\in R^\modI=R^\modJ$, and $w\in
    B^\modI=B^\modJ$. We show $f_{R',Y}^\modI(v)\equality_\modI w$. By
    construction, we have $(v,f_{R',Y}^\modI(v))\in
    {R'}_Y^\modJ$. Since $\set{R'_Y\sqsubseteq
      S_{\set{R'_Y,R}},R\sqsubseteq
      S_{\set{R'_Y,R}}}\subseteq\Psi(\P)$, we thus have
    $\set{(v,f_{R',Y}^\modI(v)),(v,w)}\subseteq
    S_{\set{R'_Y,R}}^\modJ$. Since $A\sqsubseteq{\le} 1
    S_{\set{R'_Y,R}}.B\in\Psi(\P)$, we conclude
    $f_{R',Y}^\modI(v)\equality_\modJ w$ and hence
    $f_{R',Y}^\modI(v)\equality_\modI w$.
  \item[T16] Let $A(f_{R',Y}(x)) \wedge B(x) \wedge
    \Inv{R}{f_{R',Y}(x)}{y}\to x \equality y\in\P$ and $\set{\tilde
      R'_Y\sqsubseteq S_{\set{\tilde R'_Y,R}},R\sqsubseteq
      S_{\set{\tilde R'_Y,R}},A\sqsubseteq{\le} 1 S_{\set{\tilde
          R'_Y,R}}.B,\tilde
      R'_Y\equiv\mathsf{inv}(R'_Y)}\subseteq\Psi(\P)$. Let
    $f_{R',Y}(v)\in A^\modI=A^\modJ$, $v\in B^\modI=B^\modJ$,
    $(f_{R',Y}(v),w)\in R^\modI=R^\modJ$, and $w\in
    B^\modI=B^\modJ$. We show $v\equality_\modI w$. By construction,
    we have $(v,f_{R',Y}^\modI(v))\in {R'}_Y^\modJ$, and hence
    $(f_{R',Y}^\modI(v),v)\in {\tilde R^{\prime\modJ}}_Y$. Since
    $\set{\tilde R'_Y\sqsubseteq S_{\set{\tilde R'_Y,R}},R\sqsubseteq
      S_{\set{\tilde R'_Y,R}}}\subseteq\Psi(\P)$, we thus have
    $\set{(f_{R',Y}^\modI(v),v),(f_{R',Y}^\modI(v),w)}\subseteq
    S_{\set{\tilde R'_Y,R}}^\modJ$. Since $A\sqsubseteq{\le} 1
    S_{\set{\tilde R'_Y,R}}.B\in\Psi(\P)$, we conclude
    $v\equality_\modJ w$ and hence $v\equality_\modI w$.
  \item[T17] Let $A(z) \wedge B(f_{R,Y}(z)) \wedge B(f_{R',Z}(z))\to
    f_{R,Y}(z) \equality f_{R',Z}(z)\in\P$ and $\set{R_Y\sqsubseteq
      S_{\set{R_Y,R'_Z}},R'_Z\sqsubseteq
      S_{\set{R_Y,R'_Z}},A\sqsubseteq{\le} 1
      S_{\set{R_Y,R'_Z}}.B}\subseteq\Psi(P)$. Let $v\in
    A^\modI=A^\modJ$ and
    $\set{f_{R,Y}^\modI(v),f_{R',Z}^\modI(v)}\subseteq
    B^\modI=B^\modJ$. We show $f_{R,Y}^\modI(v)\equality_\modI
    f_{R',Z}^\modI(v)$. By construction, we have
    $(v,f_{R,Y}^\modI(v))\in R_Y^\modJ$ and $(v,f_{R',Z}^\modI(v))\in
    {R'}_Z^\modJ$. Since $\set{R_Y\sqsubseteq
      S_{\set{R_Y,R'_Z}},R'_Z\sqsubseteq
      S_{\set{R_Y,R'_Z}}}\subseteq\Psi(\P)$, we thus have
    $\set{(v,f_{R,Y}^\modI(v)),(v,f_{R',Z}^\modI(v))}\subseteq
    S_{\set{R_Y,R'_Z}}^\modJ$. Since $A\sqsubseteq{\le} 1
    S_{\set{R_Y,R'_Z}}.B\in\Psi(\P)$, we conclude
    $f_{R,Y}^\modI(v)\equality_\modJ f_{R',Z}^\modI(v)$ and hence
    $f_{R,Y}^\modI(v)\equality_\modI f_{R',Z}^\modI(v)$.
  \item[T18] Let $A(f_{R,Y}(x)) \wedge B(x) \wedge
    B(f_{R',Z}(f_{R,Y}(x)))\to x \approx f_{R',Z}(f_{R,Y}(x))\in\P$
    and $\set{\tilde R_Y\sqsubseteq S_{\set{\tilde
          R_Y,R'_Z}},R'_Z\sqsubseteq S_{\set{\tilde
          R_Y,R'_Z}},A\sqsubseteq{\le} 1 S_{\set{\tilde
          R_Y,R'_Z}}.B,\tilde
      R_Y\equiv\mathsf{inv}(R_Y)}\subseteq\Psi(P)$. Let
    $f_{R,Y}^\modI(v)\in A^\modI=A^\modJ$, $v\in B^\modI=B^\modJ$, and
    $f_{R',Z}^\modI(f_{R,Y}^\modI(v))\in B^\modI=B^\modJ$. We show
    $v\equality_\modI f_{R',Z}^\modI(f_{R,Y}^\modI(v))$. By
    construction, we have $(v,f_{R,Y}^\modI(v))\in R_Y^\modJ$ and
    hence $(f_{R,Y}^\modI(v),v)\in\tilde R_Y^\modJ$. Moreover,
    $(f_{R,Y}^\modI(v),f_{R',Z}^\modI(f_{R,Y}^\modI(v)))\in
    {R'}_Z^\modJ$. Since $\set{\tilde R_Y\sqsubseteq S_{\set{\tilde
          R_Y,R'_Z}},R'_Z\sqsubseteq S_{\set{\tilde
          R_Y,R'_Z}}}\subseteq\Psi(\P)$, we thus have
    $\set{(f_{R,Y}^\modI(v),v),(f_{R,Y}^\modI(v),f_{R',Z}^\modI(f_{R,Y}^\modI(v)))}\subseteq
    S_{\set{\tilde R_Y,R'_Z}}^\modJ$. Since $A\sqsubseteq{\le} 1
    S_{\set{\tilde R_Y,R'_Z}}.B\in\Psi(\P)$, we conclude
    $v\equality_\modJ f_{R',Z}^\modI(f_{R,Y}^\modI(v))$ and hence
    $v\equality_\modI f_{R',Z}^\modI(f_{R,Y}^\modI(v))$.
  \item[T19] Let $R(x,y)\to\overl\bot(x)\in\P$ and $\exists
    R.\top\sqsubseteq\overl\bot\in\Psi(\P)$. Let $(v,w)\in
    R^\modI=R^\modJ$. Since $R(x,y)\to\top(y)\in\pi(\Psi(\P))$, we
    then have $v\in(\exists R.\top)^\modJ$, and hence
    $v\in\overl\bot^\modJ=\overl\bot^\modI$.
  \item[T20] Let $R(x,y)\to\overl\bot(y)\in\P$ and
    $\top\sqsubseteq\forall R.\overl\bot\in\Psi(\P)$. Let $(v,w)\in
    R^\modI=R^\modJ$. Since $R(x,y)\to\top(x)\in\pi(\Psi(\P))$, we
    then have $v\in\top^\modJ$, and hence $v\in(\forall
    R.\overl\bot)^\modJ$. Since $(v,w)\in R^\modJ$, we thus obtain
    $w\in\overl\bot^\modJ=\overl\bot^\modI$.
    \qedhere
  \end{description}
\end{proof}

\finalrewritability*

\begin{proof}
  The claims follow from the observation that $\Psi(\Xi_M(\xi(\Ont)))$
  only introduces new axioms of type T1-T2, T4, T7-T14 and T19-T20 to
  $\xi(\Ont)$ unless $\Ont$ contains functionality
  assertions. Moreover, none of the axioms in
  $\Psi(\Xi_M(\xi(\Ont)))\setminus\xi(\Ont)$ contains inverse roles
  unless so does $\xi(\Ont)$. Thus, all axioms in
  $\Psi(\Xi_M(\xi(\Ont)))\setminus\xi(\Ont)$ are
  \begin{itemize}
  \item in Horn-$\mathcal{ALC}$ if $\Ont$ is between $\mathcal{ELU}$ and $\mathcal{ALCH}$;
  \item in Horn-$\mathcal{ALCI}$ if $\Ont$ is in $\mathcal{ALCI}$ or $\mathcal{ALCHI}$;
  \item in Horn-$\mathcal{ALCH}$ if $\Ont$ is in $\mathcal{ALCF}$ or $\mathcal{ALCHF}$;
  \item in Horn-$\mathcal{ALCHI}$ if $\Ont$ is in $\mathcal{ALCIF}$ or $\mathcal{ALCHIF}$.
  \end{itemize}
  The claims immediately follow.
\end{proof}

\section{Proofs for Section \ref{sec:complexity-results}}
\label{sec:proofs-complexity}

\elusatexptimehard*

\begin{proof}
  We prove the claim by adapting the \EXPTIME-hardness argument for
  Horn-\ALC{} by \citeA{KroetzschRH13}. Given a polynomially
  space-bounded alternating Turing machine $\mathcal{M}$ and a word
  $w$, \citeauthor{KroetzschRH13}\ construct a Horn-\ALC{} ontology
  $\Ont_{\mathcal{M},w}$ such that $\mathcal{M}$ accepts a $w$ if and
  only if $\Ont_{\mathcal{M},w}\models I_w\sqsubseteq A$ for a concept
  name $A$ and a conjunction $I_w$ that encodes $w$. Equivalently,
  $\mathcal{M}$ accepts $w$ if and only if
  $\Ont_{\mathcal{M},w}\cup\set{I_w\sqcap A\sqsubseteq\bot}$ is
  unsatisfiable.

  Ontology $\Ont_{\mathcal{M},w}$ is in \EL{} except for axioms of the
  form $H\sqcap C\sqsubseteq\forall S.C$. We will now encode all such
  axioms into $\ELU$. Let $not\_C$ be a fresh concept name for every
  atomic concept $C$ in $\Ont_{\mathcal{M},w}$, and let
  $\Ont'_{\mathcal{M},w}$ be obtained from $\Ont_{\mathcal{M},w}$ by
  replacing every axiom of the form $H\sqcap C\sqsubseteq\forall S.C$
  by the axioms $H\sqcap C\sqcap\exists S.not\_C\sqsubseteq\bot$,
  $C\sqcap not\_C\sqsubseteq\bot$ and $\top\sqsubseteq C\sqcup
  not\_C$. Clearly, $\Ont'_{\mathcal{M},w}\cup\set{I_w\sqcap
    A\sqsubseteq\bot}$ is in \ELU{} and
  $\Ont'_{\mathcal{M},w}\cup\set{I_w\sqcap A\sqsubseteq\bot}$ is
  satisfiable if and only if so is
  $\Ont_{\mathcal{M},w}\cup\set{I_w\sqcap A\sqsubseteq\bot}$. The
  claim follows since the set $\mset{not\_C}{\top\sqsubseteq C\sqcup
    not\_C\in\Ont'_{\mathcal{M},w}}$ is a marking of
  $\Ont'_{\mathcal{M},w}\cup\set{I_w\sqcap A\sqsubseteq\bot}$.
\end{proof}

\complexity*

\begin{proof}
  The claim follows by Theorem~\ref{thm:final-rewritability},
  Lemma~\ref{lem:elu-sat-exptime-hard} and the results for logics
  between Horn-$\mathcal{ALC}$ and Horn-$\mathcal{ALCHIF}$
  in~\cite{KroetzschRH13}.
\end{proof}


\end{document}